\newcommand{\justin}[1]{\ifthenelse{\boolean{include-notes}}{\textcolor{orange}{\textbf{Jaime:} #1}}{}}
\newcommand{\princeton}[1]{\ifthenelse{\boolean{include-notes}}{\textcolor{orange}{#1}}{}}
\algrenewcommand\alglinenumber[1]{\footnotesize #1}
\algrenewcommand\algorithmicindent{1.2em}
\newcommand{\Denv}{\mathcal{D}_{\text{env}}}
\newcommand{\iidsim}{\overset{\text{iid}}{\sim}}
\newcommand{\Prob}{\mathbb{P}}
\newcommand{\simX}[1]{\tilde{X}_{#1}}
\newcommand{\simY}[1]{f(\tilde{X}_{#1})}
\newcommand{\method}{\textbf{SureSim}}
\newcommand{\methodClassicalWSR}{\textbf{Classical}\xspace}
\newcommand{\methodPPIUnif}{\method\xspace}
\newcommand{\methodControlVariate}{\textbf{Control Variate}\xspace}
\newcommand{\methodUBUnif}{\method\xspace-\textbf{UB}\xspace}
\newcommand{\methodPPINonAsym}{\method \xspace\textbf{(2-Stage)}\xspace}
\newcommand{\methodUB}{\method-\textbf{UB (2-Stage)}\xspace}
\newtheorem{theorem}{Theorem}
\newcommand{\longdash}[1][2em]{%
  \makebox[#1]{$\m@th\smash-\mkern-7mu\cleaders\hbox{$\mkern-2mu\smash-\mkern-2mu$}\hfill\mkern-7mu\smash-$}}
\newcommand{\omitskip}{\kern-\arraycolsep}
\author[1]{Apurva Badithela}
\author[1*]{David Snyder}
\author[1*]{Lihan Zha}
\author[2]{Joseph Mikhail}
\author[3\dagger]{Matthew O'Kelly}
\author[4\dagger]{Anushri Dixit}
\author[1]{Anirudha Majumdar}
\affiliation[1]{Princeton University}
\affiliation[2]{University of Texas, Austin}
\affiliation[3]{Waymo}
\affiliation[4]{University of California, Los Angeles}
\begin{document}

%% Title options
\title{Reliable and Scalable Robot Policy Evaluation with Imperfect Simulators}

%===============================================================================
% \abstract{
%     As robot foundation models are becoming increasingly capable of performing complex manipulation tasks in a diverse set of environments, rigorous evaluation of these learned policies is crucial for assessing performance and guiding improvement. However, these policies are often evaluated on fewer than \(50\) real-world trials, making it challenging to confidently assess their performance on metrics such as success rate. We present a framework to augments real-world evaluations with large-scale simulations to provide stronger inferences on real-world policy performance instead of scaling up real-world evaluations. Our pipeline results in confidence intervals that are non-asymptotically valid, and save up to \(20\%\) of hardware evaluation cost.
% }

%% Polished Abstract
\abstract{
Rapid progress in imitation learning, foundation models, and large-scale datasets has led to robot manipulation policies that generalize to a wide-range of tasks and environments. However, rigorous evaluation of these policies remains a challenge. Typically in practice, robot policies are often evaluated on a small number of hardware trials without any statistical assurances. We present SureSim, a framework to augment large-scale simulation with relatively small-scale real-world testing to provide reliable inferences on the real-world performance of a policy. Our key idea is to formalize the problem of combining real and simulation evaluations as a prediction-powered inference problem, in which a small number of paired real and simulation evaluations are used to rectify bias in large-scale simulation. We then leverage non-asymptotic mean estimation algorithms to provide confidence intervals on mean policy performance. Using physics-based simulation, we evaluate both diffusion policy and multi-task fine-tuned \(\pi_0\) on a joint distribution of objects and initial conditions, and find that our approach saves over \(20-25\%\) of hardware evaluation effort to achieve similar bounds on policy performance. 
}

\keywords{Policy Evaluation, Finite-Sample Statistical Inferences, Real2Sim
}

\website{
https://suresim-robot-eval.github.io  % full web address
}
{
suresim-robot-eval.github.io   % display text
}

% \website{
% \href{https://suresim-robot-eval.github.io}{{S}ure{S}im-robot-eval.github.io}   % display text
% }

% https://github.com/project (optional) % full web address
% }
% {
% github.com/project (optional)  % display text
% }

%\date{\today}
%\correspondence{ <\email{}> and  <\email{}>}
% You can add additional metadata fields as follows 
% \metadata[Code]{\url{}}
% \metadata[Blogpost]{\url{}}
% \metadata[Project Website]{\url{suresim-robot-eval.github.io}}
\maketitle

%====================================================================

\begin{figure}[h]
    \centering
    \includegraphics[width=\linewidth]{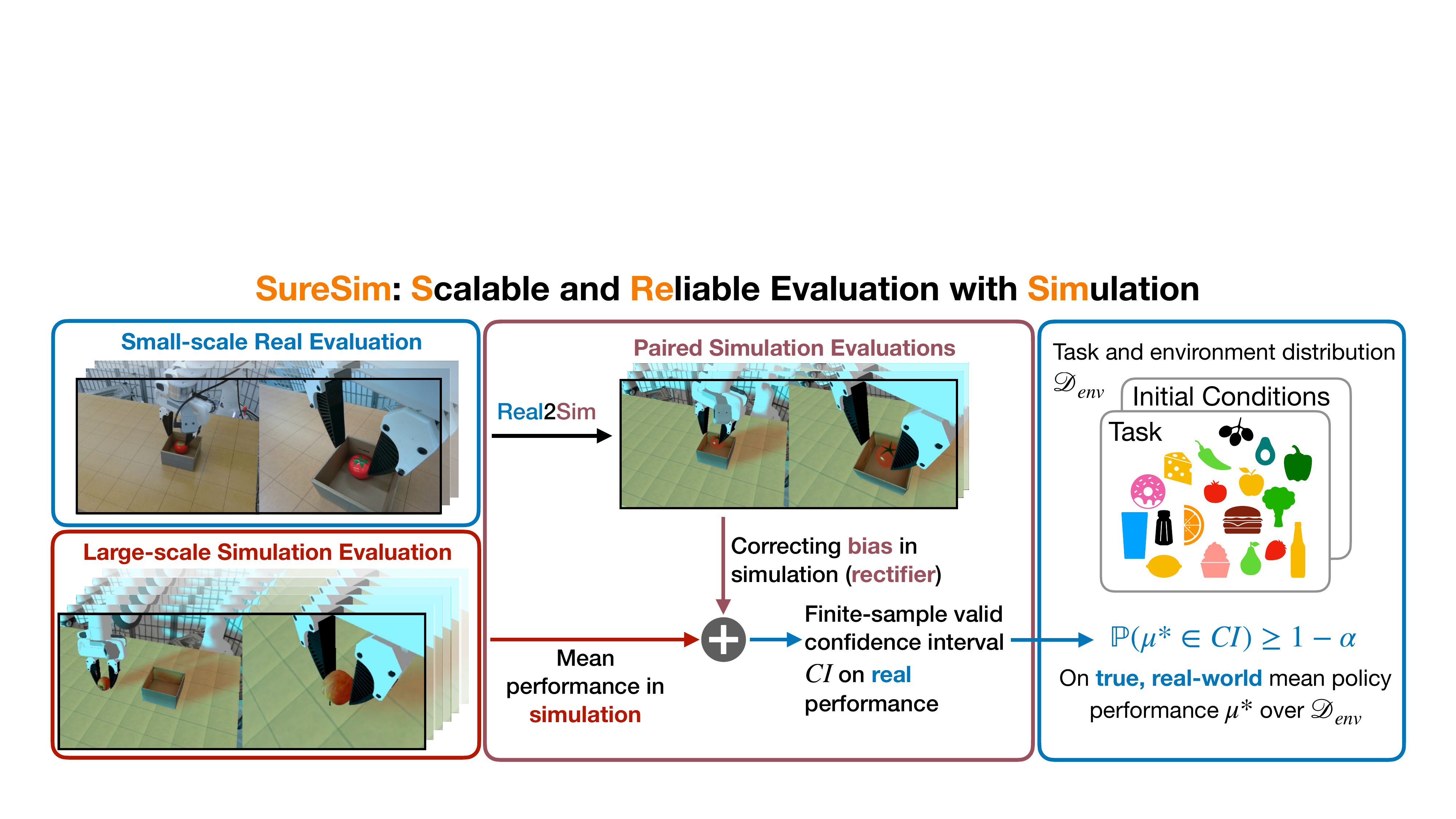}
    \caption{\small Our goal is to evaluate a policy by computing bounds on its mean real-world performance on a diverse environment distribution \(\mathcal{D}_{\text{env}}\). We present a framework that augments real-world evaluations with simulation evaluations to provide stronger inferences on real-world policy performance that could otherwise only be obtained by scaling up real-world evaluations.}
    \label{fig:overview}
\end{figure}
\section{Introduction}
\label{sec:intro}

% General introduction on why policy evaluation is important
Advancing robot learning requires statistically rigorous policy evaluation for reliably assessing how policies generalize to new tasks and environments~\cite{kress2024robot,gao2025taxonomy,barreiros2025careful}. Rapid progress in deep learning was driven by standardized metrics and evaluation benchmarks such as ImageNet~\cite{deng2009imagenet} and COCO~\cite{lin2014microsoft} in computer vision, and SquaD~\cite{rajpurkar2016squad} and GLUE/SuperGLUE~\cite{wang2018glue,wang2019superglue} in natural language. Unlike the static benchmarks in these domains, robot policy evaluation in the real-world requires physical interaction of the robot and its environment which is resource intensive in time and human effort. For instance, consider the fundamental question of evaluating the success rate of a policy on a distribution of environments. Due to the expensive nature of real-world evaluation, most research studies report empirical success rates of policies evaluated on a small number (e.g., 20-40) of trials. At the same time, there is a growing consensus for rigorous statistical analysis and nuanced discussions of evaluation criteria and policy failure modes~\cite{snyder2025your,kress2024robot,barreiros2025careful}. As a result, assessing whether a policy will perform reliably in a new environment distribution remains a core challenge~\cite{gao2025taxonomy}. 

% On broad current approaches and the focus of our paper
In robotic manipulation, recent advances in physics-based simulators~\cite{taomaniskill3} and action-conditioned video prediction models~\cite{jang2025dreamgen,quevedo2025evaluating} provide scalable alternatives to real-world policy evaluation. While growing evidence suggests that simulation performance correlates well with real-world performance in aggregate across a diversity of tasks and environments~\cite{li2024evaluating,1x_world}, the simulation-to-real gap precludes rigorous statistical inferences about real-world outcomes from simulation results alone. This paper presents a framework to augment a small number of real-world evaluations with large-scale simulations to achieve scalable policy evaluation with trustworthy statistical inferences about real-world performance. Crucially, our framework can achieve tighter statistical bounds on policy performance by scaling up the number of simulations in place of scaling the number of real-world evaluations.

% On current challenges to motivate why we cannot just use simulation evaluations.
However, using large-scale simulations for policy evaluation with trustworthy statistical inferences on real performance faces significant challenges due to the simulation-to-real gap, stemming from mismatches in visual features (e.g., lighting conditions, object textures) and inaccurate modeling of contact physics and real physical parameters (e.g., friction coefficients)~\cite{zhou2025autoeval,pfaff2025scalable}. Current robot policies, including foundation models and imitation learning-based policies, can be sensitive to such discrepancies. As a result, performance bounds solely relying on large-scale simulation predictions can be biased. 

%% Decide where to put the following later
%% Correcting for this bias with real-world evaluations in a statistically rigorous manner requires a paired dataset of simulation and real evaluations. The simulation–real gap also poses challenges for paired evaluations in the following ways. First, reliably predicting the real outcome for a specific initial condition is extremely challenging. Secondly, real-world evaluations can be noisy --- repeated trials from the same initial condition can produce different outcomes due to inherent stochasticity and policy sensitivity to hard‑to‑control perturbations (e.g., lighting or object placement). 

% Specific problem being studied in this paper, an overview of our approach, and that our guarnatees are valid in a finite number of samples.
We tackle the aforementioned challenges in order to provide confidence intervals on mean performance of robot manipulation policies. Our key idea is to connect the problem of combining simulated and real-world evaluations to that of prediction powered inference (PPI)~\cite{angelopoulos2023prediction,angelopoulos2023ppi++}. PPI is a paradigm for valid statistical inference that can leverage a large set of learned model predictions together with a comparatively small number of gold-standard data.
In our setting, gold-standard labels are real-world evaluations of a policy, while the predictions are obtained from simulation evaluations. For a bounded performance metric, the resulting confidence intervals on mean policy performance are valid with the finite samples of real and simulated data. When simulation is sufficiently predictive, PPI yields tighter non-asymptotic confidence bounds than using real-world trials alone, allowing us to scale with simulation rather than costly hardware evaluations. Our experiments show that it is possible to save \(20-25\%\) of hardware trials using state-of-the-art physics-based simulators~\cite{taomaniskill3}.

\textbf{Statement of Contributions.} First, we present a rigorous policy evaluation framework for finite-sample valid inferences on real-world performance by combining large-scale simulation trials with a relatively small number of real trials. A key element is pairing each real trial with its corresponding simulation trial on the same task or environment instance to estimate and correct for simulation bias. To operationalize this, we introduce a real2sim pipeline to leverage prediction powered inference, and we identify best practices for integrating simulation with real-world evaluation to obtain tighter confidence intervals. Second, we demonstrate our evaluation paradigm on a single-task diffusion policy~\cite{chi2023diffusion} trained from scratch as well as the robot foundation model \(\pi_0\)~\cite{black2024pi_0} finetuned on multiple tasks. 
Finally, we discuss the sensitivity of our method to different types of real-simulation gap, including an example of when the  gap is too large for simulation to provide benefits over real-only trials. 

\section{Related Work}
\label{sec:related}
\textbf{Real-world Policy Evaluation.} 
Real-world evaluation is expensive because it offers limited parallelism while often requiring manual logging of outcomes and careful resetting of environments. Yet, it remains the gold-standard for assessing policy performance, driving significant efforts to establish standardized robotic benchmarks with carefully defined tasks, environments, and robot setups for reproducible policy evaluation~\cite{heo2023furniturebench,luo2025fmb,yang2019replab,khargonkar2024scenereplica,collins2023ramp}. 
A comprehensive list of best practices for rigorously evaluating robot policies is detailed in~\cite{kress2024robot}. 
To address these challenges, the community is building cloud-based evaluation platforms~\cite{pickem2017robotarium,zhou2023train,liu2021ocrtoc,bauer2022real,zhou2025autoeval} and distributed evaluator networks for unbiased, pairwise policy comparisons~\cite{atreya2025roboarena}. For example, AutoEval~\cite{zhou2025autoeval} autonomously classifies outcomes and resets environments using fine-tuned robot foundation models, while RoboArena~\cite{atreya2025roboarena} evaluates policies on the DROID platform~\cite{khazatsky2024droid}.
However, rapidly evaluating policies by scaling hardware evaluations with sufficient coverage for statistical assurances remains challenging.

\textbf{Policy Evaluation in Simulation.} Physics-based simulation benchmarks~\cite{todorov2012mujoco,tassa2018deepmind,makoviychuk2021isaac,taomaniskill3,zhu2020robosuite,nasiriany2024robocasa,rlbench,pumacay2024colosseum,zheng2022vlmbench,mees2022calvin} offer a reproducible and cheaper alternative to real-world robot policy evaluation. For example, SIMPLER~\cite{li2024evaluating} uses system identification and real2sim image editing methods to mitigate visual and dynamics discrepancies, showing that simulation-based performance rankings of vision-language-action models from match real-world rankings. However, setting up physics-based simulation can be time-consuming, especially when optimizing for visual fidelity and accurate matching of real-world dynamics. 
On the other hand, action-conditioned video world models~\cite{jang2025dreamgen,yang2023learning,brooks2024video} promise faster scene initialization via text, image, or video prompts~\cite{agarwal2025cosmos}. While their use in policy evaluation is nascent, it is attracting growing interest due to advantages over physics-based simulation~\cite{1x_world,quevedo2025evaluating}. These models, however, remain susceptible to hallucinations, and accurately capturing real-world dynamics is still a major challenge. Yet, simulation remains a valuable proxy. For example, simulation-based rankings—whether across different policies or for a given policy under diverse environmental factors—have been shown to correlate well with real-world performance~\cite{kadian2020sim2real,li2024evaluating,1x_world,pumacay2024colosseum,heo2023furniturebench}. 
Predictive red-teaming algorithms~\cite{majumdar2025predictive} offer an alternative to simulation by predicting whether a policy will succeed in a new environment without policy rollouts, showing strong correlation between real and predicted performance rankings across various environmental factors. In contrast to these methods, our approach leverages simulation, even when imperfect, to provide assurances on \emph{real} policy performance over a distribution of tasks and environments. 

\textbf{Statistically Confident Policy Evaluation.} 
In end-to-end self-driving applications, scalable simulation-based evaluation using importance sampling was used to provide statistical confidence on the safety of a self-driving policy~\cite{o2018scalable}. However, real-world evaluation remains gold-standard since it is difficult to model the real distribution of environments in simulation, which can lead to a bias in the resulting guarantees. In manipulation, limited by real-world evaluation costs and the large diversity of environments to evaluate in, researchers typically compare policy performance using only 20-30 real-world trials. However, such small sample sizes are insufficient to draw statistically significant conclusions in policy comparisons~\cite{snyder2025your}. Recognizing this need for reliable policy evaluation, a recent study compares generalist large behavior models to single-task policy counterpart using rigorous statistical evaluation methods, incorporating A/B real-world testing, and comprehensive real-world and simulation trials with robust statistical analysis~\cite{barreiros2025careful}. Sequential policy comparison frameworks further reduce real-world evaluation cost while maintaining statistical validity under anytime stopping~\cite{snyder2025your}. 
Beyond comparing policies, it is also important to assess the individual policy performance. For binary success criteria, Vincent \emph{et al.}~\cite{vincent2024generalizable} provide optimal confidence intervals from real evaluations. For non-binary metrics, confidence intervals may be obtained from real-world evaluation samples via concentration inequalities (e.g., Hoeffding~\cite{hoeffding1963probability}), though this would require a large real evaluation budget. Instead, we scale simulation while requiring a small number of real evaluations to ensure reliability.

Finally, concurrent to our work are efforts that apply statistical inference techniques to off-policy evaluation~\cite{mandyam2025perry}, and the use of control variates to combine simulation evaluation with real-world logged data for evaluation in self-driving applications~\cite{luo2025leveraging}. While conceptually related, our work differs in two key respects. First, we present finite-sample valid confidence bounds on real-world performance of manipulation policies that are tighter than existing baselines. Secondly, we demonstrate the idea of combining real-world and simulation evaluations on robotic manipulation, which faces a unique set of challenges --- robot policies can be sensitive to small perturbations in the environment, and the robot and environment state are more tightly interdependent. 

\section{Problem Statement}
\label{sec:problem}
Let \(\Denv\) denote a distribution over real-world environments \(\mathcal{X}\) in which we wish to evaluate a robot policy \(\pi \in \Pi\). In robotic manipulation, this distribution could be defined by the diversity of objects and tasks, environmental factors (e.g., lighting, background, table texture), and spatial variations in object and robot poses, among others. We assume a bounded evaluation metric \(M: \mathcal{X} \times \Pi \rightarrow [0,1]\), such as a success/failure metric or a continuous metric for partial task completion.

We consider the mean estimation problem in policy evaluation, where the goal is to estimate the policy’s average performance according to metric \(M\) over the environment distribution \(\Denv\). Formally, we define mean policy performance \(\mu^*\) as: $$ \mu^* = \mathbb{E}_{X \sim \Denv} [Y(X)],$$
where \(Y(X)=M(X,\pi)\) is the outcome of evaluating policy \(\pi\) in environment \(X\) under metric \(M\).
For sampled environments \(X_1, \ldots, X_{n} \iidsim \Denv\), the outcomes of real-world policy evaluation according to metric \(M\) are denoted as \(Y_1,\ldots,Y_n\), respectively. We define the empirical evaluation sample as \(S_n = \{(X_1, Y_1), \ldots , (X_n, Y_n)\}\). Using the empirical data \(S_n\) we seek a confidence interval \(CI = (l, u)\) that contains \(\mu^*\) with high probability. Confidence intervals provide  bounds on the true performance of a policy with high probability, and can be useful for decision-making and policy comparison. Mathematically stated, for any significance level \(\alpha \in (0,1)\) and any finite number \(n\) of real-world evaluations, we seek a confidence interval \(CI\) such that: 
\begin{equation}
\label{eq:guarantee}
\Prob_{S_n \sim \mathcal{D}^{n}_{\text{env}}} (\mu^* \in CI) \geq 1 - \alpha,
\end{equation}
where the probability measure is defined over the draw of the empirical evaluation sample \(S_n\). Any method that satisfies~\Cref{eq:guarantee} is Type-I error controlling at significance level \(\alpha\). While the interval \([0,1]\) trivially satisfies this guarantee, it provides little insight; therefore, we seek a tight confidence interval satisfying \Cref{eq:guarantee}. Importantly, we make no assumptions on the distribution of $Y_i$ beyond measurability and the boundedness induced by the metric $M$. We do not require \emph{a priori} knowledge of a distribution family, the existence of a density, or other structural assumptions. 

A nonasymptotic confidence interval for \(\mu^*\) can be derived directly from the finite number of gold-standard evaluations \(Y_1, \ldots, Y_n\) using standard non-asymptotic methods like Hoeffding~\cite{hoeffding1963probability} or Bernstein inequalities, or more recent state-of-the-art betting-based methods~\cite{waudby2024estimating}. 
Ideally, we want a tight interval concentrated around \(\mu^*\), but collecting a large number of real-world evaluations is costly. In contrast, simulation evaluations are relatively cheap and scalable. This motivates the central question of our work: \emph{Can we make valid inferences on the real performance of a policy by augmenting a small amount of real-world evaluations with a large number of simulation evaluations?}
\section{Approach: Simulation to Augment Real Tests via Prediction Powered Inference}
\label{sec:ppi}
 Suppose we have access to a simulator for policy evaluation. Let \(\mathcal{X}_{\text{sim}}\) denote the set of simulation environments. We can define a real2sim function \(g: \mathcal{X} \rightarrow \mathcal{X}_{\text{sim}}\) that translates a real environment setup into simulation. For each real environment \(X \in \mathcal{X}\), the corresponding simulation environment is defined as \(\tilde{X} = g(X)\). For example, as shown in~\Cref{fig:overview}, if \(X\) is a robot manipulation environment---defined by the robot (type, dynamics, texture, and initial pose), the objects (3D models, textures, material properties, and initial pose), and background conditions (lighting and background textures) --- then the corresponding simulation environment \(\tilde{X}\) is constructed to closely match the real-world dynamics and visual features. Simulation evaluations are given by the function \(f: \mathcal{X}_{\text{sim}} \rightarrow [0,1]\) defined as \(f(\tilde{X}) = M_{\text{sim}}(\tilde{X}, \pi)\), where \(\tilde{X} \in \mathcal{X}_{\text{sim}}\) and \(M_{\text{sim}}\) simulation evaluation metric. 
 
% Mean policy performance from large-scale simulations need not be close to the true mean \(\mu^*\). 
Correcting for bias in large-scale simulation predictions and deriving valid confidence intervals for \(\mu^*\) requires more than simply combining real and simulated evaluations through imputation. To tackle this challenge, we identify prediction powered inference (PPI)~\cite{angelopoulos2023prediction} as a suitable mathematical framework for our problem. Prediction powered inference enables valid statistical inference when experimental datasets are supplemented with machine-learning predictions. It has been applied to diverse problems such as protein structure analysis with AlphaFold, galaxy classification, and deforestation monitoring using computer vision~\cite{angelopoulos2023prediction}. For example, in galaxy classification, human annotators provide a limited set of ground-truth labels (“spiral” vs. “not spiral”) from galaxy images, while computer vision models provide cheaper predictions on the input images at a much larger-scale. 

\textbf{SureSim.} In our setting, each input corresponds to a robot manipulation environment \(X\), with the ground-truth label given by real outcome \(Y(X)\) of rolling out the policy. We choose simulation as a proxy for real-world evaluation but unlike the problems studied in~\cite{angelopoulos2023prediction}, we cannot directly evaluate on \(X\) in simulation. Composing the real2sim function with the simulator yields simulation predictions for real environments: \(f(\tilde{X}) = f(g(X))\). This formulation enables us to apply prediction-powered inference to rigorously combine real-world trials with large-scale simulation for reliable estimates of real performance.
 To apply PPI, we require a small number of paired evaluations in both real and simulation. For the set of \(n+N\) real environments \(X_1, \ldots X_{n+N} \iidsim \Denv\), we apply the real2sim function to get simulation environments \(\simX{1}, \ldots, \simX{n+N}\), where \(\simX{i} = g(X_i)\). The corresponding outcomes of evaluating the policy in simulation are denoted as \(f(\simX{1}),\ldots, f(\simX{n+N})\). Uniformly at random, we select \(n\) of those environments in which to conduct real trials. Thus, the paired evaluation data comprises of the real-world outcomes and associated simulation predictions: \(D_{\text{paired}} =  \{(Y_i, f(\simX{i}))\}_{i=1}^n\). The remaining number of additional simulation evaluations \(N\) exceeds the number of real-world evaluations \(n\), and these are denoted as \(D_{\text{sim}} = \{\simY{i}\}_{i=n+1}^{n+N}\). The \(i^{th}\) data sample is defined as:
\begin{equation}
    \Delta_i = \frac{n+N}{n} \big(Y_i - f(\tilde{X}_i)\big)\xi_i + f(\tilde{X}_i),\, 1 \leq i \leq n+N,
    \label{eq:unif_data}
\end{equation}
where \(\xi_i\) is an indicator of whether \((Y_i, f(\tilde{X}_i)) \in D_{\text{paired}}\).
The sample mean of~\Cref{eq:unif_data} yields the uniform PPI estimator for \(\mu^*\)~\cite{zrnic2024active}:
 \begin{equation}
    \mu^{\text{unif}}_{\text{PPI}} = \underbrace{\frac{1}{n}\sum_{i=N}^{n+N}(Y_i - f(\tilde{X}_i))}_{\text{Rectifier}} \, + \underbrace{\frac{1}{n+N}\sum_{i=1}^{n+N}f(\tilde{X}_i)}_{\text{Simulation evaluations}}.
    \label{eq:ppi_unif}
\end{equation}
The first term is referred as the rectifier, since it adjusts the bias in simulation predictions. For some significance level \(\alpha\), a confidence interval for \(\mu^{*}\) is computed from the sample evaluation data (\Cref{eq:unif_data}) using non-asymptotic methods for mean estimation via the Waudby-Smith and Ramdas (WSR) algorithm~\cite{waudby2024estimating}, which just requires the bounds of the random variable to be specified a priori. 
% That is, we pass the samples from~\Cref{eq:unif_data} as input to the WSR procedure with significance level \(\alpha\). 
This method is denoted as \methodPPIUnif (\textbf{S}calable and \textbf{R}eliable Policy \textbf{E}valuation with \textbf{Sim}ulation), and the framework is summarized in~\Cref{alg:ppi_eval}. We also present a hedged variant termed \methodUBUnif which returns a confidence interval resulting from a union bound of \methodPPIUnif (computed at budget \(\frac{3 \alpha}{4}\)) and \methodClassicalWSR (at budget \(\frac{\alpha}{4}\)).

\methodPPINonAsym. Prediction-powered inference was originally introduced with a two-stage setup for data sampling~\cite{angelopoulos2023prediction}. This approach considers two sets of environments drawn i.i.d from \(\Denv\): the first set consists of a small number \(n\) of environments for which we collect both real and paired simulation evaluations, and the second set consists of a large number \(N\) of additional simulations. The PPI estimator for mean estimation is defined as:
\begin{equation}
    \mu_{\text{PPI}} = \underbrace{\frac{1}{n}\sum_{i=1}^n (Y_i - f(\tilde{X}_i))}_{\text{Rectifier}} \qquad + \underbrace{\frac{1}{N}\sum_{i=1}^N f(\tilde{X}_i)}_{\text{Additional simulation evaluations}},
\end{equation}
where \(\mu_{\text{PPI}}\) is also an unbiased estimate of \(\mu^*\). If we assume real and simulation scores to lie in the range \([0,1]\), the rectifier is bounded between \([-1,1]\). For a significance level \(\alpha\), a confidence interval on \(\mu^*\) is computed by separately deriving confidence intervals for the rectifier at some significance level \(\delta < \alpha\) and for the additional simulation data at significance \(\alpha-\delta\), and taking their Minkowski sum~\cite{angelopoulos2023prediction}.\footnote{The allocation of risk to $\delta$ and $\alpha-\delta$ can be approximately optimized. In practical settings, using $\delta \approx 0.9\alpha$ is a reliable heuristic.} For mean estimation, it can be proven that the true mean \(\mu^*\) lies in the resulting confidence interval with probability \(1-\alpha\)~\cite{angelopoulos2023prediction}.
To obtain finite sample guarantees, the rectifier and prediction confidence intervals are computed using WSR~\cite{waudby2024estimating}. In this two-stage approach, the bloating of the rectifier bounds coupled with the small number \(n\) of rectifier samples introduces inefficiencies in the resulting confidence interval. Similar to \methodUBUnif, we also introduce a  hedged version of this method termed \methodUB. 

\begin{theorem}
    \textbf{SureSim} and its variants return finite-sample valid confidence interval \(CI\) that satisfies~\Cref{eq:guarantee}.
\end{theorem}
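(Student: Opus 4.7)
My plan is to handle each variant by verifying two ingredients: (i) the quantities whose means we are estimating sum to $\mu^{*}$, and (ii) each component confidence interval used is finite-sample valid, so that a union bound (or direct WSR call) assembles them into a CI for $\mu^{*}$ at the prescribed level.

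First, for the uniform variant \methodPPIUnif, I would show that the samples $\Delta_i$ in \Cref{eq:unif_data} are i.i.d., bounded, and satisfy $\mathbb{E}[\Delta_i]=\mu^{*}$. Boundedness is immediate: since $Y_i,f(\tilde{X}_i)\in[0,1]$, each $\Delta_i$ lies in $[-\tfrac{n+N}{n},\,\tfrac{n+N}{n}+1]$, a range known a priori. For the mean, condition on $(X_i,Y_i,f(\tilde{X}_i))$ and use that $\xi_i$ is independent of the data with $\mathbb{E}[\xi_i]=n/(n+N)$; a short calculation gives $\mathbb{E}[\Delta_i]=\mathbb{E}[Y_i-f(\tilde{X}_i)]+\mathbb{E}[f(\tilde{X}_i)]=\mu^{*}$. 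Since the environments $X_i\sim\Denv$ are i.i.d., the $\Delta_i$ inherit i.i.d.\ structure (under a Bernoulli selection scheme for the $\xi_i$). Applying the Waudby-Smith--Ramdas betting-based CI to $\{\Delta_i\}_{i=1}^{n+N}$ at risk $\alpha$ returns an interval satisfying~\Cref{eq:guarantee} by the finite-sample validity of WSR.

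Next, \methodUBUnif returns the intersection of the \methodPPIUnif interval at level $3\alpha/4$ and the \methodClassicalWSR interval (WSR applied directly to the real samples $Y_1,\ldots,Y_n$) at level $\alpha/4$. Each interval covers $\mu^{*}$ with its stated probability by the arguments above and by the standard WSR guarantee, so a union bound yields miscoverage at most $3\alpha/4+\alpha/4=\alpha$, and the intersection is therefore a valid $(1-\alpha)$ CI. For \methodPPINonAsym, decompose $\mu^{*}=\mathbb{E}[Y-f(\tilde{X})]+\mathbb{E}[f(\tilde{X})]$: estimate the first term with WSR on the $n$ paired rectifier samples (bounded in $[-1,1]$) at level $\delta$, and the second with WSR on the $N$ additional simulation samples (bounded in $[0,1]$) at level $\alpha-\delta$. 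Independence of the two stages (they use disjoint samples) and a union bound guarantee both component CIs are simultaneously valid with probability at least $1-\alpha$; the Minkowski sum of the two CIs then contains the sum of the two means, namely $\mu^{*}$. An identical union-bound argument extends the result to \methodUB.

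The main obstacle I expect is justifying the i.i.d.\ treatment of the $\Delta_i$ in the uniform variant when the paired subset is chosen uniformly without replacement rather than by independent Bernoulli draws: in that case the $\xi_i$ are only exchangeable and their sum is pinned at $n$, which breaks literal independence. I would address this by either (a) replacing uniform-subset selection with Poisson/Bernoulli sampling, preserving unbiasedness and the stated bound on $\Delta_i$; or (b) invoking the extension of WSR-style betting bounds to exchangeable and martingale-structured sequences, which is standard in the prediction-powered inference literature and preserves the $(1-\alpha)$ coverage. Everything else reduces to bookkeeping on the union bounds and invoking the WSR guarantee on bounded i.i.d.\ samples.
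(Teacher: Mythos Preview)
Your proposal is correct and substantially more detailed than the paper's own proof. The paper's argument is a single observation: since the prediction rule is the composition $f\circ g:\mathcal{X}\to[0,1]$ and the $\{X_i\}_{i=1}^{n+N}$ are drawn i.i.d.\ from $\Denv$, finite-sample validity follows directly from the cited PPI results \cite{angelopoulos2023prediction,zrnic2024active}. You have unpacked precisely what those citations establish---unbiasedness and boundedness of the $\Delta_i$ for the uniform variant, the Minkowski-sum-plus-union-bound assembly for the two-stage variant, and the further union-bound hedging for the UB variants. Your flag on exchangeability versus i.i.d.\ under uniform-without-replacement selection is exactly the technical wrinkle the paper hides inside its citation to \cite{zrnic2024active}, and your option~(b) is the appropriate resolution. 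Two minor notes: in the two-stage argument, independence of the stages is not needed---the union bound alone suffices; and your range for $\Delta_i$ coincides with the (loose) bounds the paper hard-codes in \Cref{alg:uniform_ppi}. The paper's approach is economical but entirely deferential; yours is self-contained and would survive without the external references.
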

\begin{proof}
    By construction of the real2sim function, the prediction rule is the functional composition \(f \circ g: \mathcal{X} \rightarrow [0,1]\). Under the assumption that \(\{X_i\}_{i=1}^{n+N}\) are drawn i.i.d from the task and distribution \(\Denv\), the finite-sample validity of the resulting confidence interval follows directly from~\cite{angelopoulos2023prediction,zrnic2024active}. 
\end{proof}

\begin{algorithm2e}[H]
\caption{\textbf{SureSim}}
\label{alg:ppi_eval}
\KwData{Real task and environment distribution \(\Denv\), Real-to-sim function \(g\), Policy \(\pi\), Real metric \(M\), Simulation metric \(M_{\text{sim}}\), Significance levels \(0 < \delta < \alpha < 1\)}
\KwResult{Confidence interval $CI$ on true mean \(\mu^*\)}
    Sample environments \(X_1 \ldots, X_{n+N} \sim \Denv\) \\
    \For{$i \gets 1$ \KwTo $n+N$}{
        \(\tilde{X}_i \gets g(X_i)\) \Comment{Apply real2sim function}\\ 
        \(f(\tilde{X}_i) \gets M_{\text{sim}}(\tilde{X}_i, \pi)\) \Comment{Simulation evaluation}
    }
    \For{$i \gets 1$ \KwTo $n$}{
        \(Y_i \gets M(X_i, \pi)\) \Comment{Real evaluation}
    }
    \(D_{\text{paired}} = \{(Y_i, f(\tilde{X}_i))\}_{i=1}^n\), \(D_{\text{sim}} = \{f(\tilde{X}_i)\}_{i=n+1}^{n+N}\) \\
     $CI \gets \textsc{PPI}(D_{\text{paired}}, D_{\text{sim}}, f, n, N, \alpha, \delta)$ \Comment{\textsc{UniformPPI} or \textsc{2-StagePPI}} \\
    % \uIf{\textnormal{using \emph{uniform} inference}}{
    %     $CI \gets \textsc{UniformPPI}(D_{\text{paired}}, D_{\text{sim}}, f, n, N, \alpha)$
    % }
    % \uElseIf{\textnormal{using \emph{two-stage}  inference}}{
    %     $CI \gets \textsc{2-StagePPI}(D_{\text{paired}}, D_{\text{sim}}, \alpha, \delta)$
    % }
\Return{$CI$}
\end{algorithm2e}
\textbf{Baseline.} Termed as \textbf{Classical}, our primary baseline computes finite-sample confidence intervals --- without augmenting simulation --- by applying the non-asymptotic WSR procedure~\cite{angelopoulos2023prediction,waudby2024estimating} directly on real evaluations. These intervals represent the standard procedure for obtaining confidence intervals on the mean, and serve as an ablation with respect to the incorporation of proxy data. 

\textbf{Related Methods.} While we primarily compare our methods to \textbf{Classical} since it provides a finite-sample guarantee, we also implement and discuss the control variates procedure (denoted  \methodControlVariate)~\cite{luo2025leveraging} in the sim2sim setting. 
We do not consider it a baseline for the hardware experiments because it is not provably Type-I error controlling in finite samples. Therefore, the practitioner cannot know for their problem that the resulting confidence interval from~\cite{luo2025leveraging} contains the true mean at a specified level of confidence. In particular, this procedure utilizes the empirical correlation of the simulation evaluations to make optimization-based reductions to the mean estimation, at the expense of looser dependence on the confidence level $\alpha$. These paired samples yield a variance estimate for the control variate estimator, which is subsequently used in Chebyshev’s inequality to derive a confidence interval for the mean~\cite{luo2025leveraging}. However, in finite-sample settings, the variance estimate may be biased for small \(n\), and even unbiased constructions (such as through data splitting) need not upper-bound the \emph{true} variance, a requirement for Chebyshev’s inequality.

\section{Robot Experiments}
\label{sec:experiments}
We illustrate our method on pick-and-place tasks and evaluate policy generalization across diverse pick object types and initial conditions. Specifically, we seek to address the following questions: 
%% uncomment for arxiv:
\begin{enumerate}
    \item How tight are our confidence intervals relative to the baselines? What benefit does this translate to in terms of real-world evaluation cost?
    \item How does the confidence interval width decrease as we scale the number of simulation evaluations?
    \item How well does our method perform under high and low real-simulation correlation?
\end{enumerate}

\begin{wrapfigure}{r}{0.5\textwidth}
\centering
\vspace{-1em}        \includegraphics[width=0.5\textwidth]{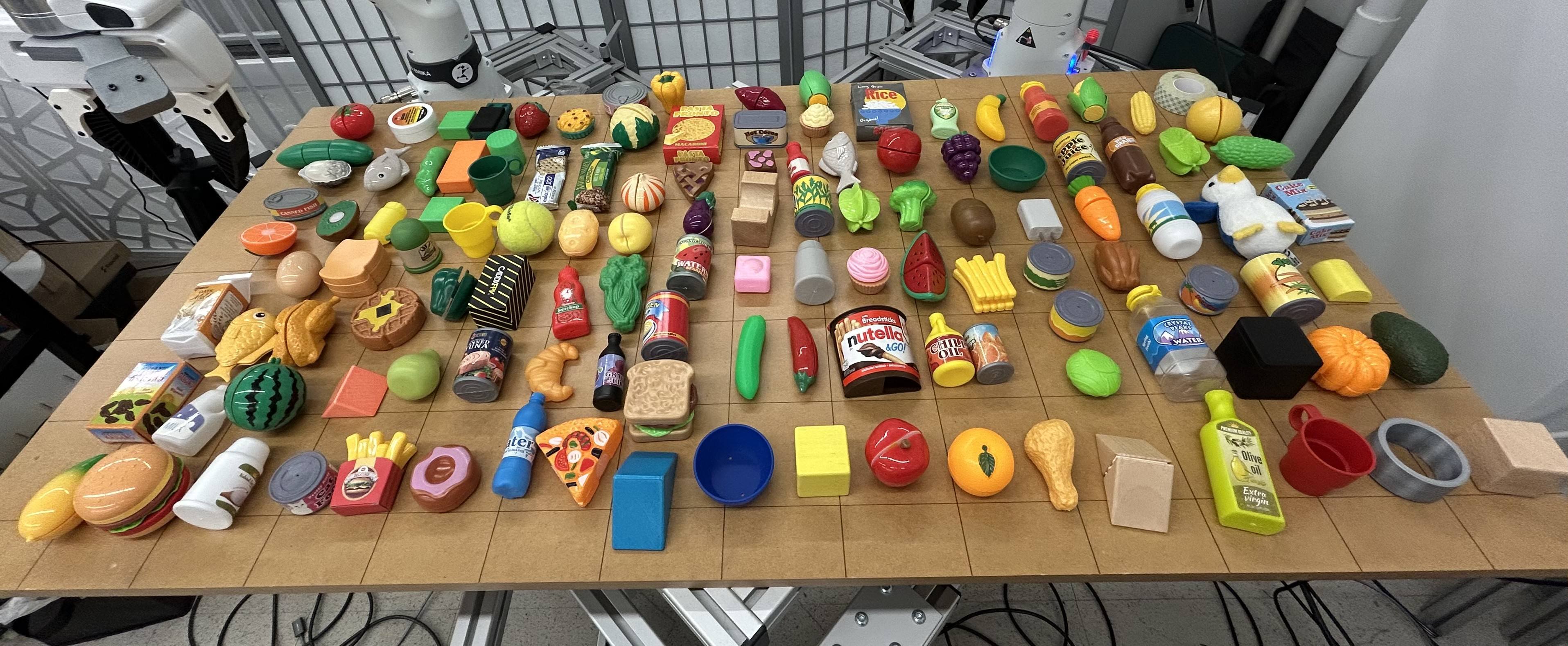}
\caption{Objects used for real-world evaluations.}
\vspace{-1em}  
\label{fig:real_objs}
\end{wrapfigure}
\textbf{Real2Sim Pipeline.} For evaluating object generalization, we gathered around 120 objects, most of which are toy kitchen items, shown in~\Cref{fig:real_objs}. To carry out paired evaluations in simulations, 3D models for these objects were obtained from a single image of the object using an off-the-shelf tool \href{https://www.meshy.ai/}{Meshy}. These 3D models were scaled to match real-world dimensions, and their pose was set according to real-world experiments. To construct the additional simulation dataset, we draw over 2100 objects from the RoboCASA repository~\cite{nasiriany2024robocasa}, which includes assets from Objaverse~\cite{deitke2023objaverse} and assets generated using a text-to-3D model ~\href{https://lumalabs.ai/}{Luma AI}. We filter out assets that are not semantically or geometrically equivalent—objects whose category or shape lacks a counterpart in the real-world set (e.g., plates).

In an ideal setting, we would have access to a large-scale repository of real-world objects paired with corresponding 3D models—akin to the YCB dataset~\cite{calli2017yale}, but expanded to include thousands of objects. This would allow us to uniformly sample a subset of objects for real-world and paired simulation evaluation, while using the remaining objects exclusively for additional simulation evaluations. However, these large datasets do not at present exist, and therefore, we take these \(120\) objects are taken to approximate the real-world distribution of objects that we wish to evaluate our policy on. 

\textbf{Experimental Setup.} We evaluate policies on a Franka Panda robot equipped with a wrist-mounted RealSense D405 and a Logitech C920 third-person camera. For simulation, we replicate the setup in ManiSkill3~\cite{taomaniskill3} by constructing a customized Franka Panda robot in which the default gripper is replaced with the 3D model used in our real-world experiments. The robot base pose in simulation is aligned with the real robot through manual calibration. Similarly, we transfer the camera calibration parameters from the real setup—covering both the wrist-mounted camera and the third-person camera—to their counterparts in simulation. We use the same control frequency as the robot in the real world.
The workspace table is constructed by scripting a table-like mesh and overlaying it with the texture of the real table. For the background, we import a real-world mesh obtained via 3D scanning. Finally, we use the default shader with shadows enabled to strike a balance between simulation speed and visual quality, and tune lighting parameters until policy performance in simulation on randomly selected initial conditions is as high as possible. 

\noindent
\textbf{Policies.} We evaluate two policies: i) a single-task diffusion policy~\cite{chi2023diffusion} trained from scratch and ii) a generalist policy \(\pi_0\)~\cite{black2024pi_0}, fine-tuned on multiple objects. Our diffusion policy is trained on 200 demonstrations of a single task --- to pick up a tomato and place it in a plate. The training distribution comprises of the tomato and the plate being placed randomly in a 30cm-by-40cm space. Though trained on a single object, we evaluate this diffusion policy on its generalization to multiple objects. We finetune \(\pi_0\) for 7 different objects according to the language instruction ``put \(<\)object\(>\) into the box" with 40 demonstrations for each object. In the fine-tuning demonstrations, the \emph{pick} object is randomly placed in a 10cm-by-20cm grid, while the box is placed at roughly the same xy-position. After each inference step, the open-loop action horizon was set to full action chunk size of 30.

\begin{wrapfigure}{r}{0.5\textwidth}
\centering
\vspace{-2mm}
    \begin{subfigure}[b]{0.2415\textwidth}  % Adjust width as needed
        \centering
        \includegraphics[width=0.85\textwidth]{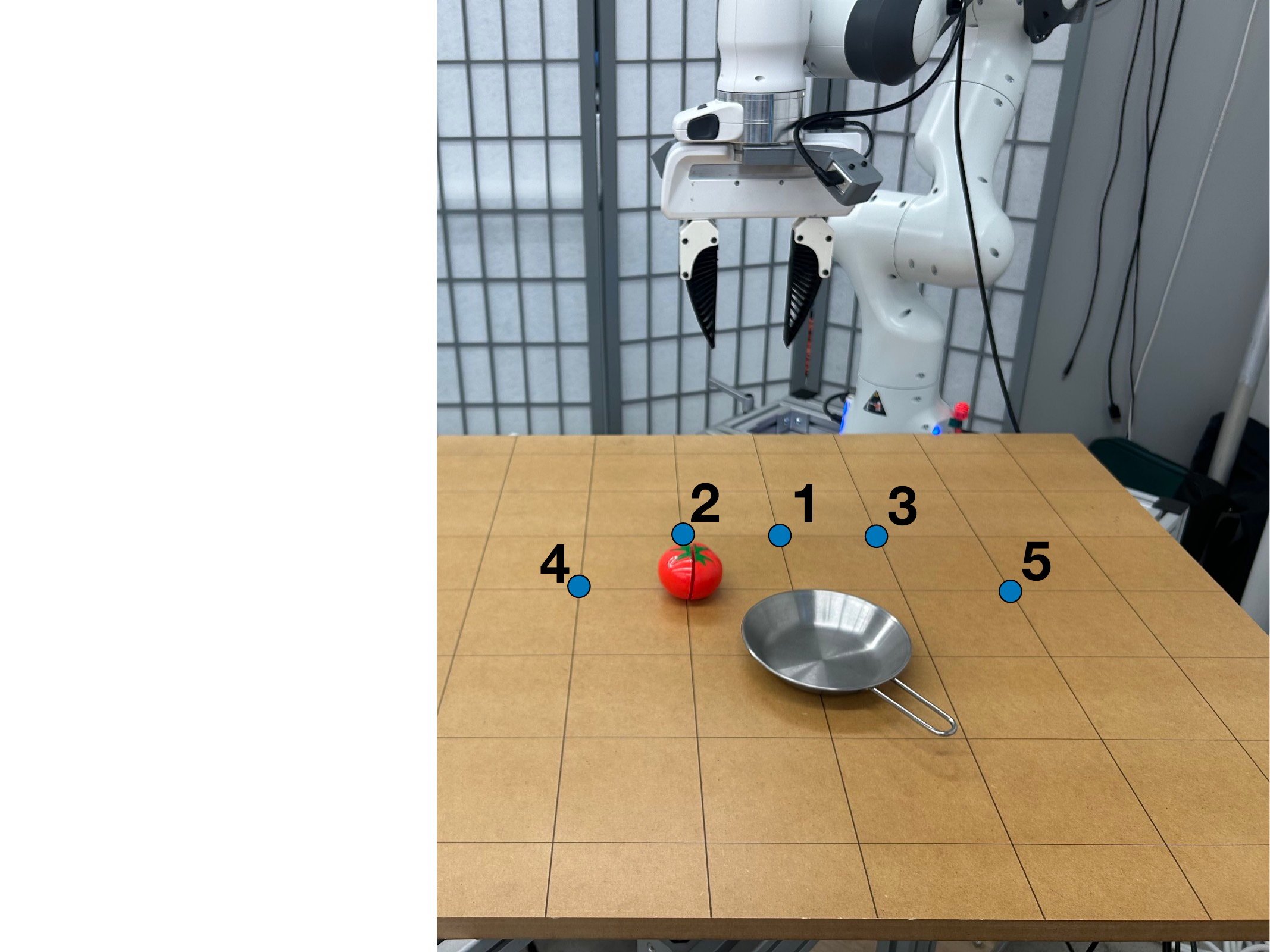}
        \caption{Diffusion Policy Setup}
        \label{fig:dp_eval_ic}
    \end{subfigure}
    \begin{subfigure}[b]{0.25\textwidth}
        \centering
        \includegraphics[width=0.84\textwidth]{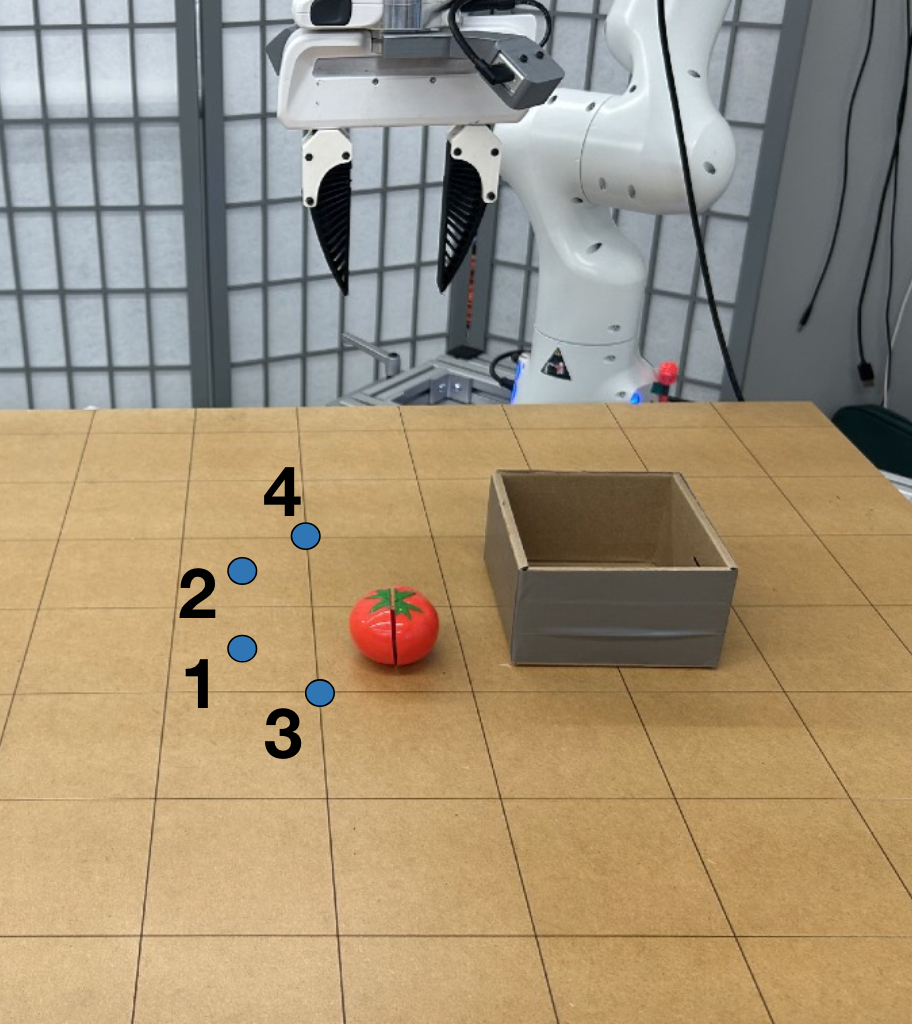}
        \caption{\(\pi_0\) Setup}
        \label{fig:pi0_eval_ic}
    \end{subfigure}
    \caption{Initial conditions for evaluation experiments}
    \vspace{-3mm}
\end{wrapfigure}

\noindent
\textbf{Evaluation Metrics.} For each real object, we rollout diffusion policy and \(\pi_0\) at five different initial conditions of the pick object as shown in~\Cref{fig:dp_eval_ic,fig:pi0_eval_ic}, respectively. Each rollout is assigned a partial evaluation score: 0 for no grasp, 0.25 for a failed grasp (object slips), 0.5 for a successful grasp, 0.75 for successful grasp but unsuccessful release over the place object, and 1 for complete task success. 
In simulation, we record a partial success score as follows: 0 for no grasp, 0.5 for successful grasp, and 1 for complete task success. 

\noindent
\textbf{Real-Simulation Evaluation Gap.} Additionally, we discuss the real-simulation evaluation gap for robot manipulation, and share a few insights to mitigate this. Crucially, for mean estimation, this gap manifests in the variance of the rectifier, which represents the difference in the real and simulation outcomes on the paired set \(D_{\text{paired}}\). That is, a high rectifier variance corresponds to low correlation on \(D_{\text{paired}}\) and a high real-simulation gap, while a low variance corresponds to high correlation and a small gap. Depending on the evaluation criteria used for constructing \(D_{\text{paired}}\), well-known sources of the real-simulation gap --- such as visual and dynamics discrepancies --- can reduce the reliability of simulation in predicting real outcomes and increase rectifier variance. For stochastic policies (e.g., diffusion policy which has randomness in the denoising process), this mismatch is further exacerbated by inconsistencies in policy seeding between real and simulated runs. For example, if we evaluate diffusion policy using a discrete evaluation metric over a set of initial conditions by pairing a single hardware trial with a simulation evaluation at the same initial condition, we are unlikely to see a high correlation on the paired set of evaluations. For the same real-simulation experimental setup, the rectifier variance can vary with the task and evaluation criteria, the policy under evaluation, and the axis of generalization considered in the distribution \(\Denv\). Together, these factors can lead to low correlation in paired evaluations, thereby diminishing the predictive utility of simulation and undermining the advantage of large-scale simulation for trustworthy inference on real performance. To address this issue, we implement the following measures: (1) we ensure that both real-world and simulation evaluations use the same random seed, and (2) in simulation, we sample 20 initial conditions from a 2cm-by-2cm box of the the real \((x,y)\) initial condition, execute the policy for each, and average the results to obtain a more robust estimate of the simulation counterpart. These measures are designed to mitigate the real-simulation gap without requiring additional real evaluations.

\subsection{Real2Sim Robot Experiments}
\label{sec:real2sim}

\begin{wrapfigure}{r}{0.6\textwidth} % 'r' or 'l' for right/left
    \centering
    \vspace{-3mm}
        \includegraphics[width=\linewidth]{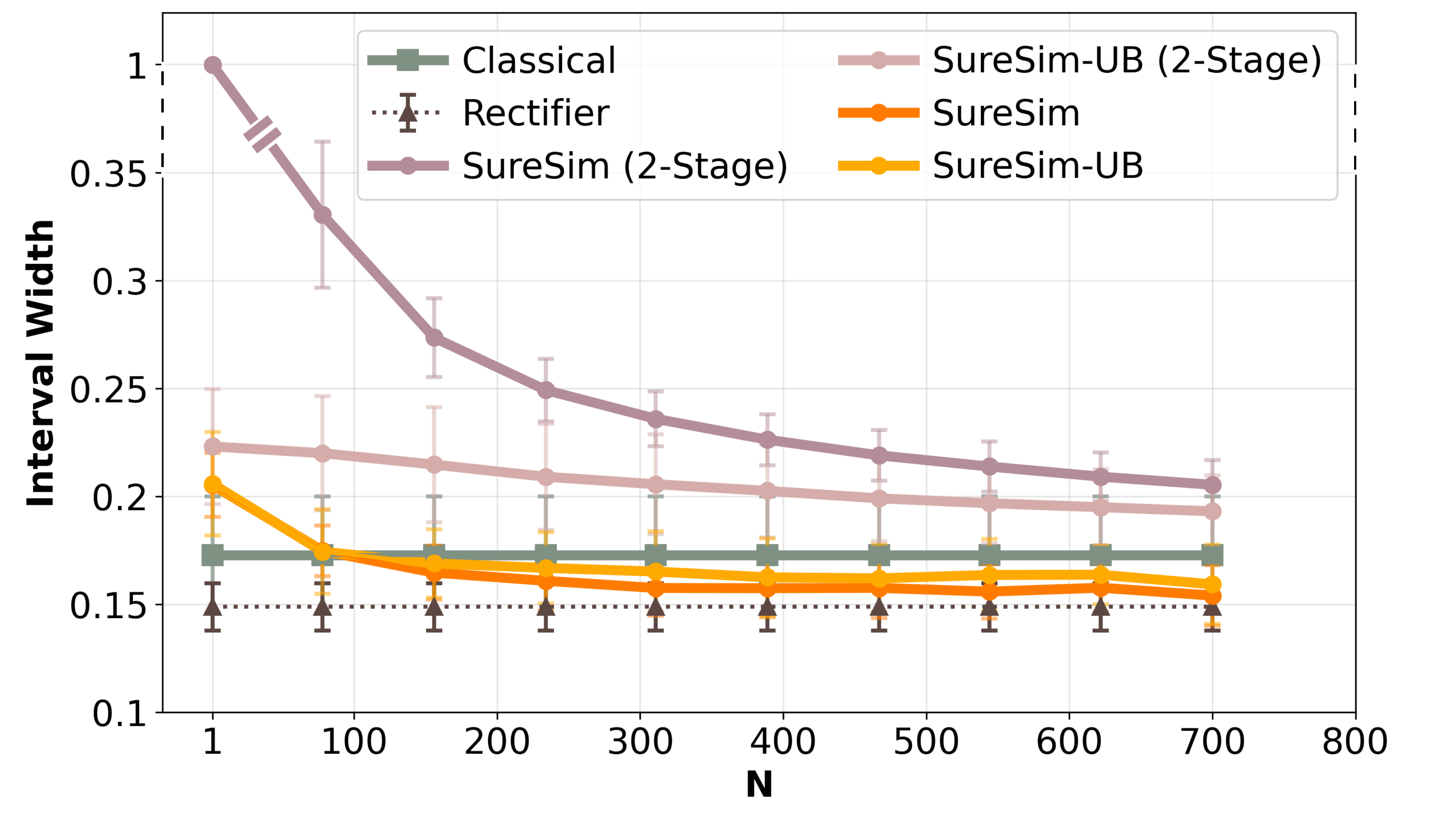}
        \caption{Evaluating Diffusion Policy with \(n=60\) paired trials and up to \(700\) additional simulations.}
        \vspace{-3mm}
        \label{fig:dp}  
\end{wrapfigure}
\textbf{Diffusion Policy.} First, we evaluate a single-task diffusion policy on a distribution of various types of pick objects. For each real object \(X_i\), we get the real label \(Y_i\) by taking the average of partial scores of trials conducted at \(5\) initial conditions shown in~\Cref{fig:dp_eval_ic}. The paired evaluation score \(f(\tilde{X}_i)\) is the average of simulation partial scores averaged over 100 simulation initial conditions corresponding to the 5 real-world initial conditions. On average, the correlation on the paired dataset is 0.72. For the additional simulation objects, we choose the Objaverse split of RoboCASA objects. For the following results, we use 100 random samples\footnote{In practice, this amounts to 100 random re-samplings of \(60\) objects (without replacement) from the bank of \(120\) real objects.} of \(n=60\) paired evaluations and up to \(N=700\) additional evaluations. For the \methodPPINonAsym family, the rectifier significance level is set to \(\delta=90\%\) of the the total significance level. All methods are given a significance level of \(\alpha=0.1\). 

\begin{wrapfigure}{r}{0.6\textwidth}    \centering
    \includegraphics[width=\linewidth]{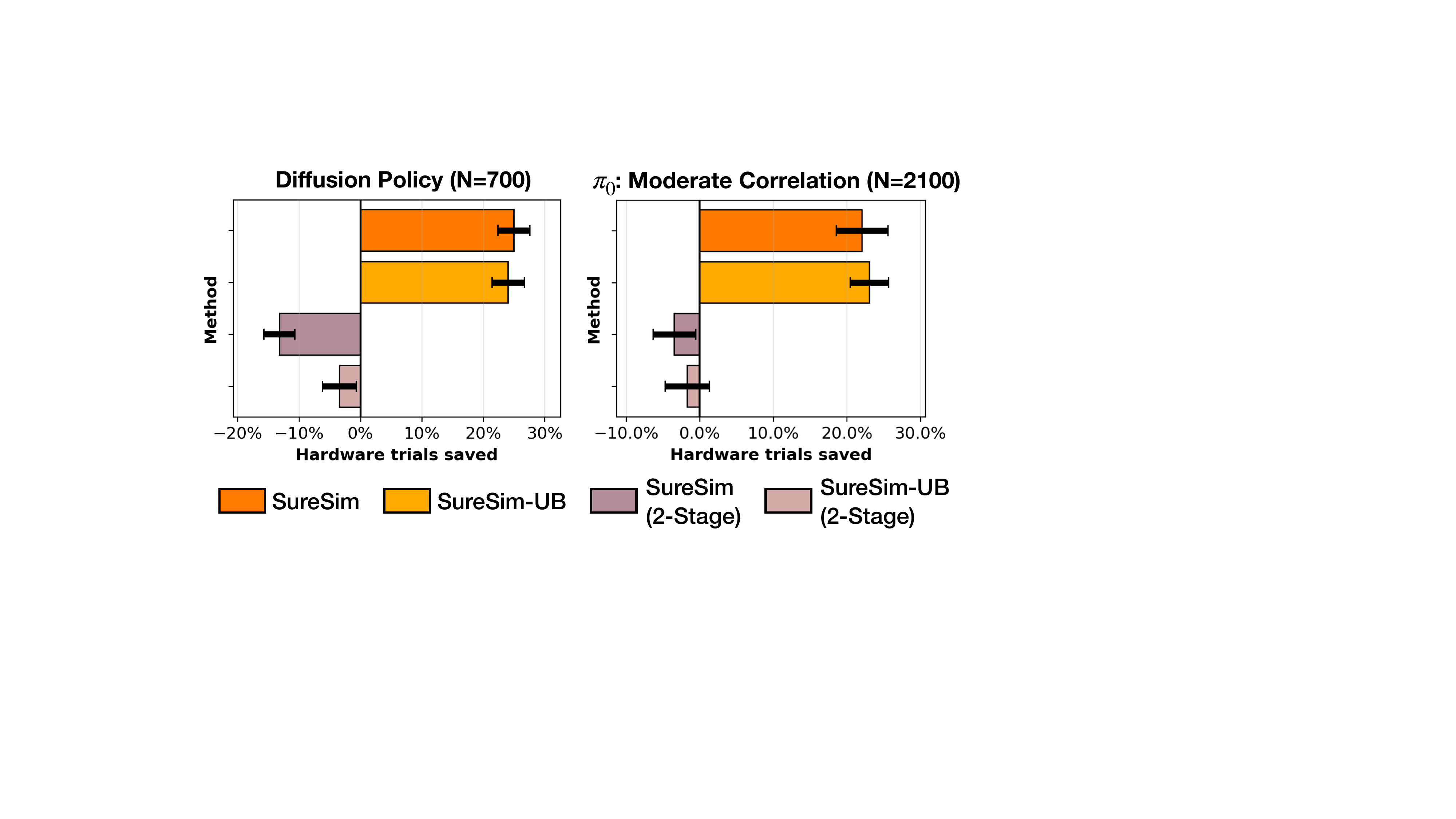}
\caption{Average number of hardware trials saved compared to \methodClassicalWSR, computed over \(100\) random draws of data. Error bars indicate standard error of the mean savings.}
\label{fig:savings}
\end{wrapfigure}
\Cref{fig:dp} illustrates the size of confidence interval widths as we scale-up simulation. At just \(100\) additional simulations, \methodPPIUnif tightens the confidence interval with respect to \methodClassicalWSR as simulation is scaled up further. In cases where the confidence interval is not truncated at \(0\) or \(1\), the rectifier interval width corresponds to a lower bound on the confidence interval width as the number of additional simulations increase. Here, the rectifier interval width is computed from finite-sample confidence intervals for the rectifier at \(\delta=0.09\) level of significance, and is determined by the rectifier variance. \methodPPIUnif and \methodUBUnif in~\Cref{fig:dp} approach this lower bound relatively quickly, indicating efficient usage in incorporating simulation data up to the limit imposed by the real-simulation gap. At \(N=700\), the mean interval width of \method\xspace is \(0.16\) which is a decrease of \(14.4\%\) compared to the interval width of length \(0.187\) for the \methodClassicalWSR method. The \methodPPINonAsym family has a slower decrease in interval width with scaling simulations as compared to \methodPPIUnif family due to: i) the two-stage procedure introducing inefficiencies in separately computing confidence intervals for the rectifier and additional simulations, and ii) the significance level allocated to the simulation confidence interval (\(\alpha -\delta = 0.01\)), requiring further simulation trials.

We study the advantage of our methods over hardware-only evaluations as follows. For each method, we compute a confidence interval at \(n=60\) samples, and iteratively search over the number of samples \(n\) given to \methodClassicalWSR until the resulting confidence interval is tighter than the method's interval.~\Cref{fig:savings} illustrates the resulting savings, where the \methodPPIUnif method family yields over \(25\%\) savings with respect to real-only evaluation. 

\begin{figure}[h]
    \centering
    \includegraphics[width=\linewidth]{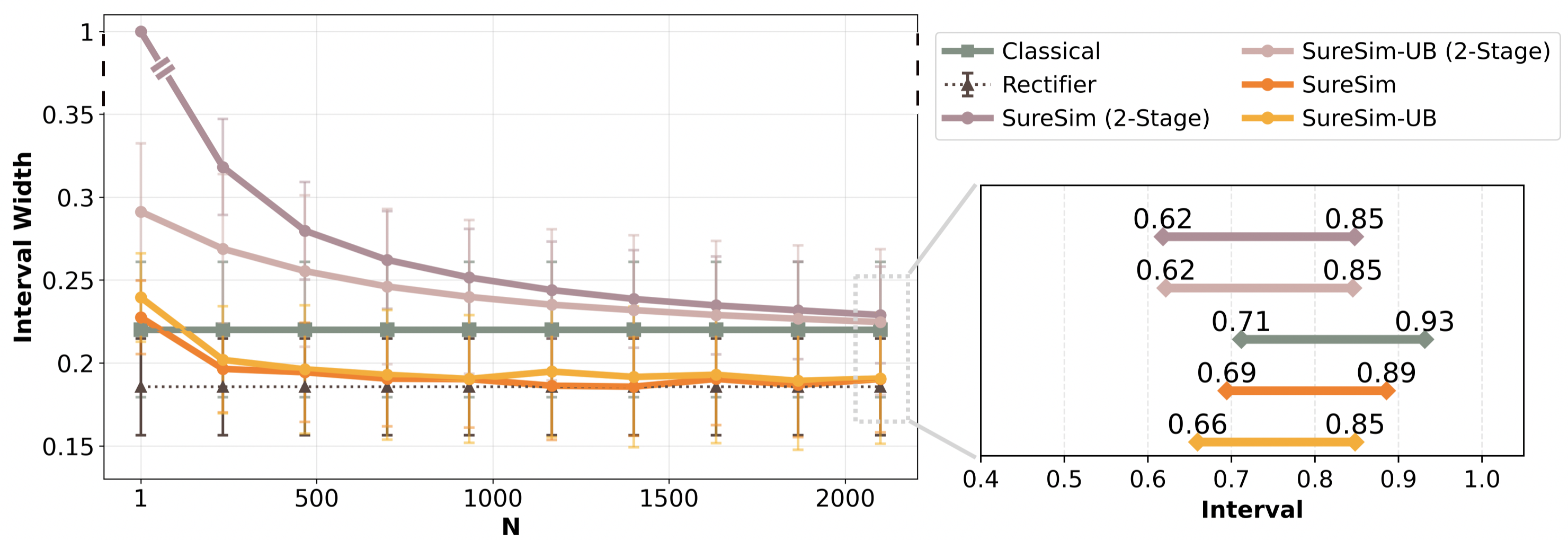}
    \caption{\textbf{How does interval width decrease with scaling simulations under moderate correlation?} This figure reports results for \(\pi_0\) evaluated on initial conditions \(\{1,2,3,4\}\) for \(n=60\) paired trials and over \(2100\) additional simulations.}
    \label{fig:pi0_real2sim}
\end{figure}
\textbf{Finetuned \(\pi_0\).} We present two examples of evaluating \(\pi_0\), where we consider a joint distribution over objects and initial conditions. In the first case, an object is randomly selected and placed at an initial condition sampled from \(\{1,2,3,4\}\), as shown in~\Cref{fig:pi0_eval_ic}. In the second case, the initial condition is sampled from \(\{1,2,3\}\). The former setting yields a moderate real-to-sim correlation, whereas the latter produces a low correlation. We present both cases to evaluate our methods under contrasting real-simulation correlation regimes. The real evaluation label for a specific object and initial condition is recorded according to the partial score metric, and the paired simulation records the average of simulation partial scores on the perturbed set of initial conditions corresponding to the real initial condition. 

\begin{wrapfigure}{r}{0.6\textwidth} 
    \centering
        \includegraphics[width=\linewidth]{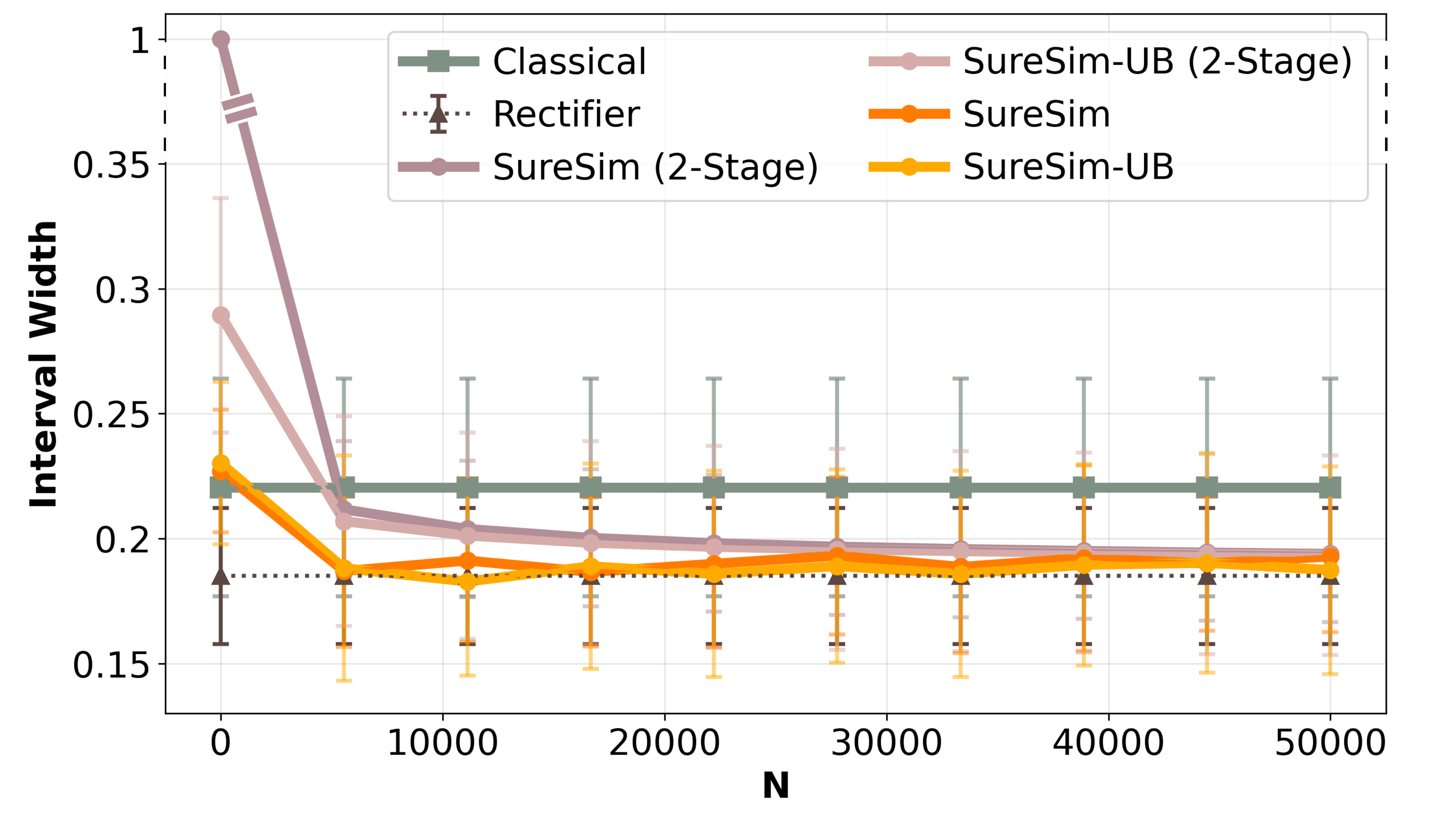}
        \caption{Evaluating \(\pi_0\) at \(n=60\) and scaling simulations up to \(N=50000\).}
        \label{fig:sanity_check}  
\end{wrapfigure}
\textit{Moderate Correlation.} In~\Cref{fig:pi0_real2sim}, we report average interval widths. Confidence intervals will vary in width and location for different draws of data from the same distribution; we illustrate one representative confidence interval in~\Cref{fig:pi0_real2sim}. The real-simulation Pearson correlation \(\rho\) is \(0.59\) on average on the paired evaluation set. The number of additional simulations is sufficient for the \method\xspace family to approach the rectifier lower bound and result in an advantage over \methodClassicalWSR. Further scaling up simulation would address the two-stage inefficiency in the \methodPPINonAsym family. However, the \method\xspace family converges by \(N=500\) additional simulations, indicating efficiency with scaling simulations compared to the \methodPPINonAsym family. As seen in~\Cref{fig:savings}, this leads to over a \(20\%\) decrease in real trials on average. In future work, we study further improvements to these finite-sample results by fine-tuning simulation.

Although our object repository is limited to approximately 2100 objects, we conduct a sanity check in~\Cref{fig:sanity_check} by sampling additional simulations with replacement, up to \(N=50,000\). While the rectifier interval width is limited by the difference in the real and simulation outcomes \((Y_i - f(\tilde{X}_i))\) on a small number of evaluations, additional simulation evaluations can be scaled up in the two-stage methods to reduce interval width. We observe that \methodPPINonAsym progressively approaches the rectifier lower bound as the number of simulations increases. \method\xspace and \methodUBUnif more efficiently converge to the rectifier lower bound within \(5000\) additional simulations, illustrating an upper bound on the benefit that additional simulation can provide given the real–simulation gap.

\textit{Low Correlation.} In~\Cref{fig:pi0_real2sim_zero_corr}, we present results for a low correlation case, where initial conditions are sampled from \(\{1,2,3\}\) shown in~\Cref{fig:pi0_eval_ic}. Empirically, the finetuned \(\pi_0\) demonstrates strong generalization to diverse object types despite being finetuned on only \(7\) objects. The initial conditions \(\{1,2,3\}\) achieve higher success rates across object types compared to initial condition \(4\). While these initial conditions are correspondingly easy and difficult in simulation, the predictive signal from simulation is insufficient to capture subtle variations in real-world performance, resulting in a low correlation of around \(-0.05\). 

This results in qualitatively different behavior. As seen in~\Cref{fig:pi0_real2sim_zero_corr}, none of our methods beat \methodClassicalWSR. This is unsurprising because the there is nothing to infer about real policy performance from simulation. In particular, as listed in~\Cref{tab:exp}, the sample variance on real data is smaller than the sample rectifier variance. As a result, scaling up simulation does not help in reducing the variance in our estimates of the true mean (\Cref{eq:guarantee,eq:ppi_unif}).

\begin{figure}[h]
    \centering
    \includegraphics[width=\linewidth]{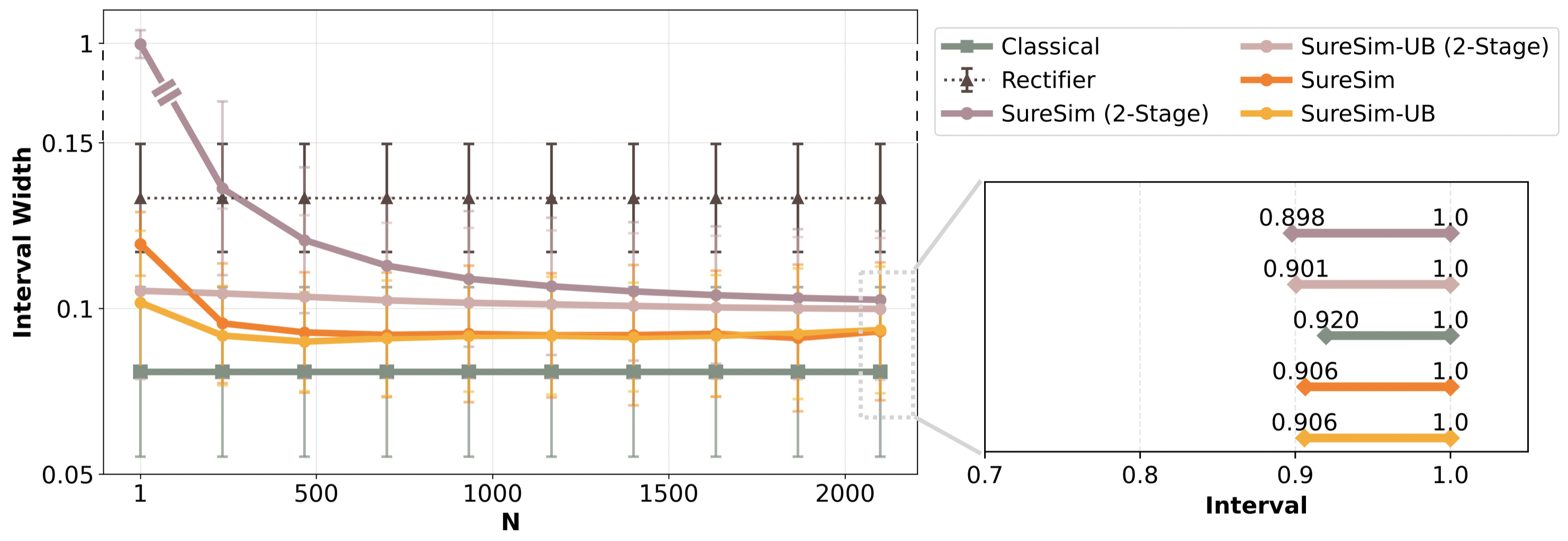}
\caption{\textbf{How does interval width decrease with scaling simulations under low correlation?} \(\pi_0\) evaluated on initial conditions \(\{1,2,3\}\) at \(n=60\). \emph{Left}: There is no decrease in interval width with scaling simulations, which is expected given the low correlation between paired real and simulation trials. Due to truncation, the interval widths are smaller than the rectifier interval width (which do not truncate here).}
\label{fig:pi0_real2sim_zero_corr}
\end{figure}

Across all Real2Sim experiments with moderate correlation, we observe that \method\xspace yields the greatest savings in terms of hardware trials saved and reduction in interval width. \method\xspace converges relatively quickly in the number of additional simulations in comparison to the two-stage methods. Furthermore, as we scale the number of simulations, the confidence intervals from our methods do not shrink to arbitrarily small widths. This controlled behavior is desirable, as it prevents overconfidence and ensures robust estimation of the mean. The gain from large-scale simulation depends on the correlation between paired real and simulated evaluations, which determines the rectifier variance. As shown in the asymptotic setting~\cite{angelopoulos2023prediction}, combining real and simulated data is effective only when the rectifier variance is smaller than the variance of real evaluations—a condition that remains necessary in the non-asymptotic regime before committing substantial effort to large-scale simulation.~\Cref{tab:exp} in~\Cref{sec:appendix_tab} summarizes key experimental parameters as well as the sample correlation, means, and variances for each of the experiments. 

\begin{figure}
    \centering
    \vspace{-1em}
    \includegraphics[width=0.95\textwidth]{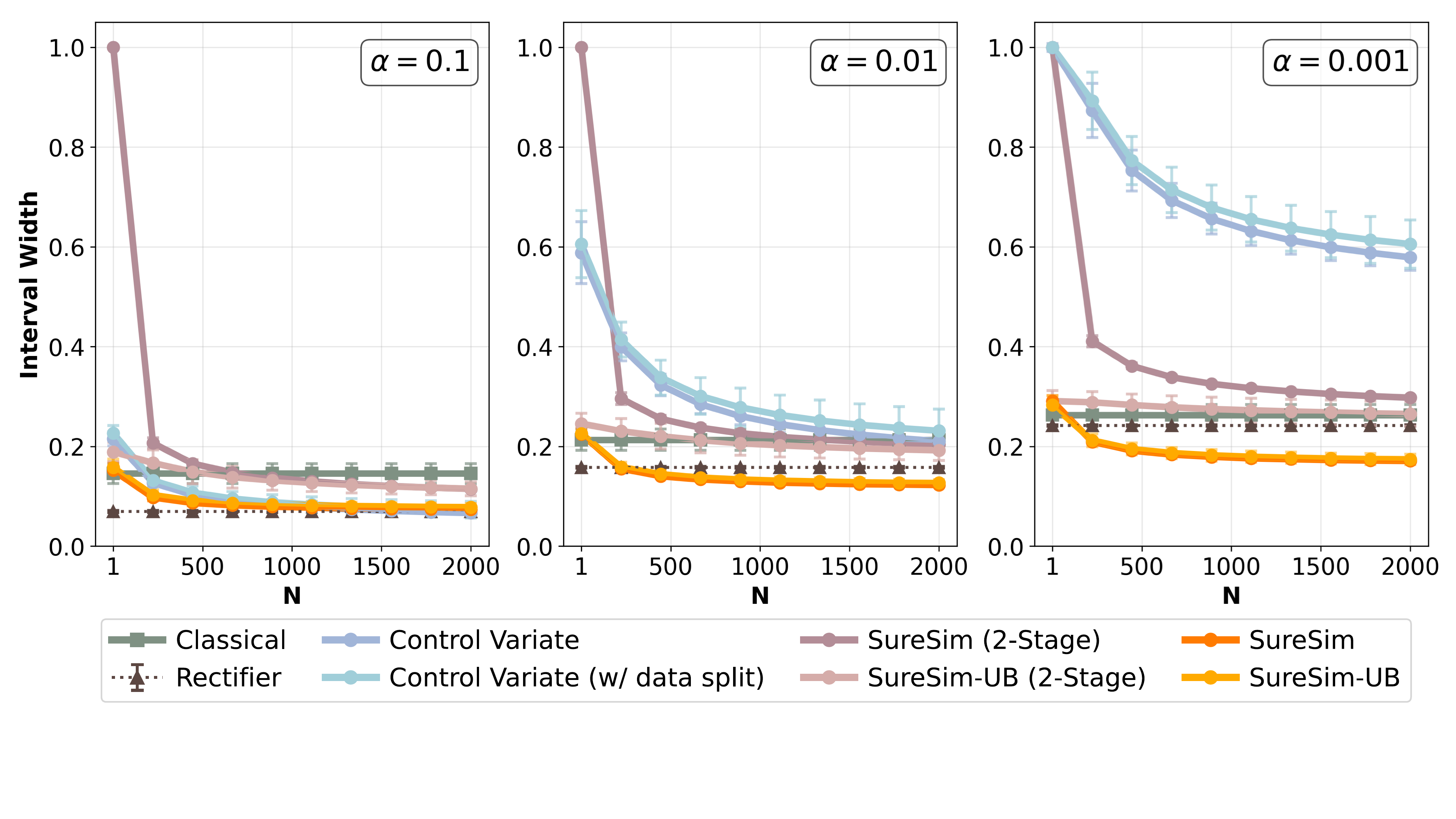}
    \caption{Prediction-powered inference for $n=100$ paired trials with up to 2000 additional simulations. Our methods always beat the classical baseline irrespective of increasing confidence levels.}
    \label{fig:sim2sim_pi0_all}
    \vspace{-1em}
\end{figure}
\subsection{Sim2Sim Experiments}
\begin{wrapfigure}{r}{0.45\textwidth}
\centering
\vspace{-1em}
    \begin{subfigure}[b]{0.49\linewidth}  % Adjust width as needed
        \centering
\includegraphics[width=0.9\textwidth]{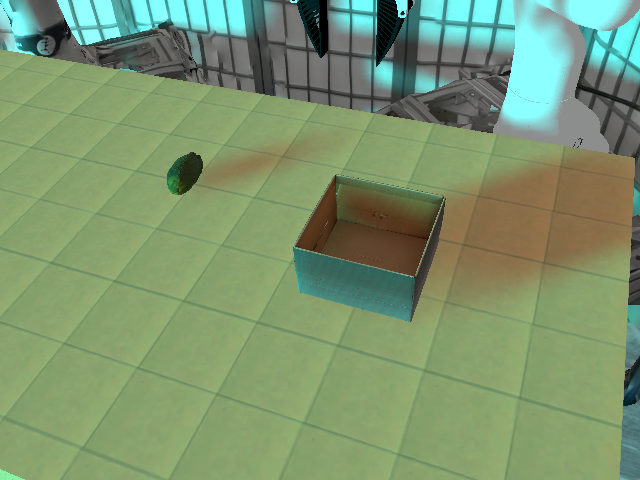}
        \caption{\([0,6,6]\)}
        \label{fig:real_light}
    \end{subfigure}
    \begin{subfigure}[b]{0.49\linewidth}
        \centering
        \includegraphics[width=0.9\textwidth]{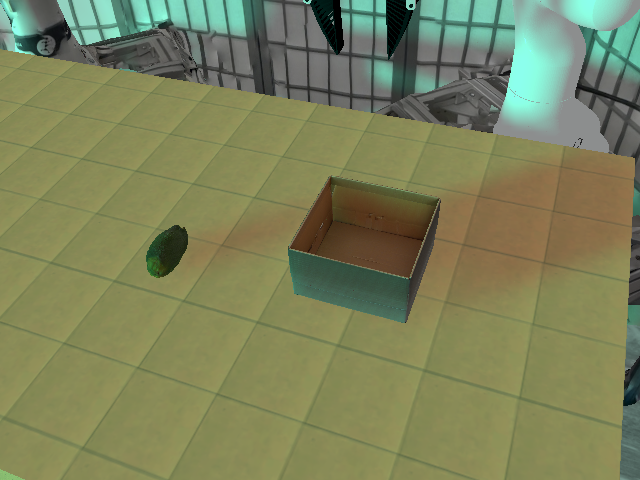}
        \caption{\([0,3,2]\)}
        \label{fig:sim_light}
    \end{subfigure}
    \caption{Sim2Sim setup with lighting parameters for ``real" (left) and ``sim" (right) settings.}
    \label{fig:sim2sim}
    \vspace{-1em}
\end{wrapfigure}
To illustrate coverage rate of confidence intervals, we run simulation-simulation experiments where we use a larger number of simulation evaluations as heldout samples to compute the ``true" mean and validate coverage. As illustrated in~\Cref{fig:sim2sim}, we use one simulator setting as the ``real" environment and the other as ``simulation". We evaluate finetuned \(\pi_0\) on 3D object models of real objects (\Cref{fig:real_objs}) as well RoboCASA. Once again, we consider a joint distribution over objects and initial conditions \(\{1,2,3,4\}\), and evaluate each trial according to the simulation partial score metric. The paired evaluation set has a very high correlation of around \(0.97\). We use 400 randomly drawn environments as heldout samples for validating coverage.
%\footnote{Note that at 400 samples, we can still expect some variance in the computed mean. For a very rigorous validation of coverage, we would need to use synthetic data, which we discuss in~\Cref{appendix:artificial_data}.}. 

\begin{wrapfigure}{r}{0.54\textwidth}
\vspace{-2em}
\centering
    \includegraphics[width=\linewidth]{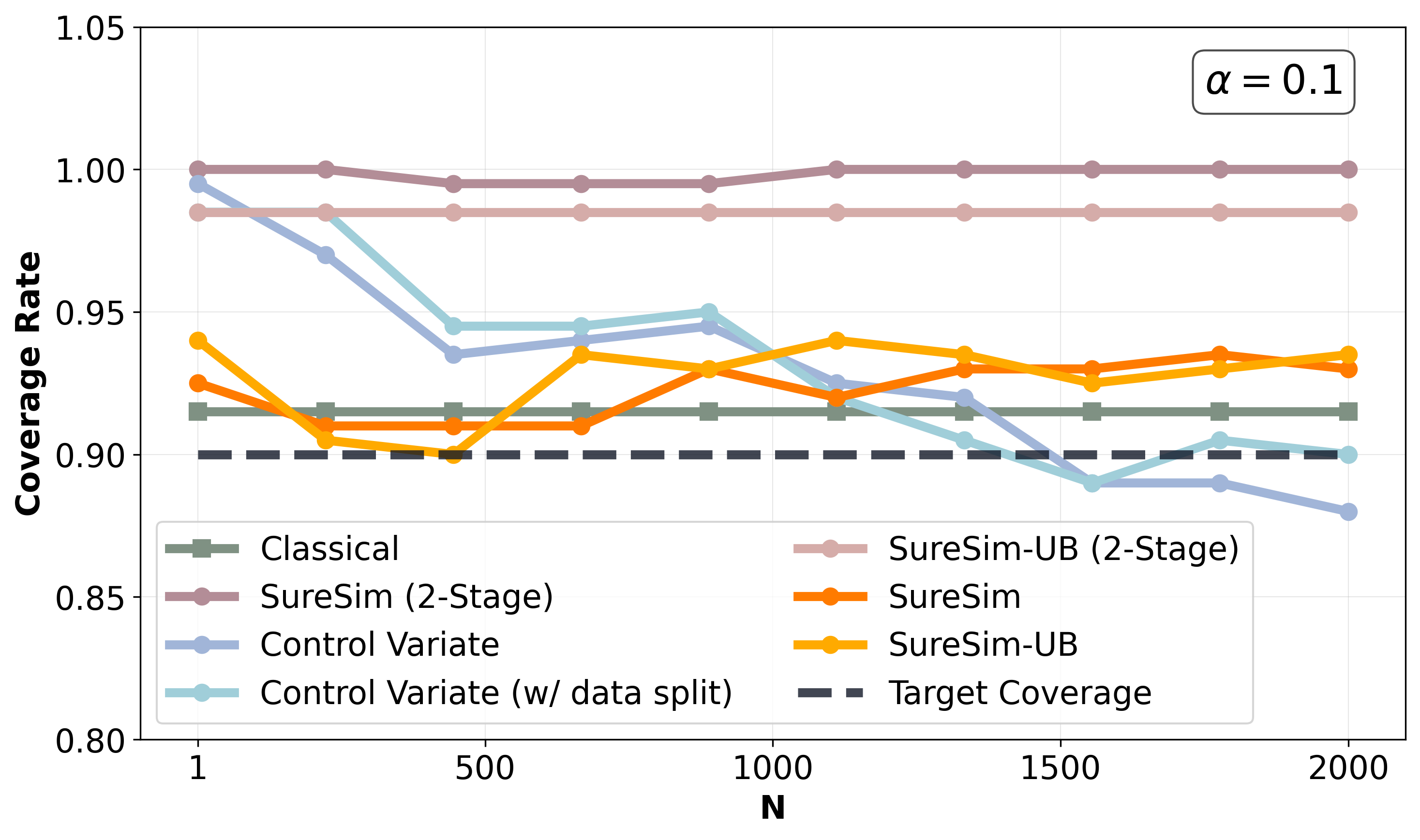}
    \caption{Validating coverage using \(400\) heldout samples.}
    \label{fig:coverage}
\end{wrapfigure}
For these experiments, we also report intervals for the \methodControlVariate method, including the standard implementation presented in~\cite{luo2025leveraging} as well as a data split version for an unbiased variance estimate. For data splitting, we use \(20\%\) of the paired data for variance estimation and the remaining for inference. As shown in~\Cref{fig:sim2sim_pi0_all}, at \(\alpha=0.1\), all methods beat \methodClassicalWSR, with the \methodPPIUnif family efficiently converging to the rectifier interval width. As the significance level decreases, \methodControlVariate no longer beats \methodClassicalWSR while \methodPPIUnif always improves.

Based on the prior discussion on the \methodControlVariate method, we do not expect it to meet the required coverage rate, and as seen in~\Cref{fig:coverage}, our experiments suggest that the empirical coverage rate can degrade with additional simulation samples. This degradation is cause for caution when interpreting the tightness of interval widths at \(\alpha=0.1\) in~\Cref{fig:sim2sim}, which highlights the importance of controlling for Type-1 error. Further rigorous validation of coverage on synthetic data is presented in~\Cref{appendix:artificial_data}\footnote{At 400 heldout samples used to characterize the ``true" mean, we can still expect some variance in the computed mean. For a very rigorous validation of coverage, we would need to use synthetic data.}.
%.  

\section{Conclusions}
\label{sec:conclusion}
We introduce \method\xspace for augmenting large-scale simulation with a relatively smaller number of real-world evaluations to provide non-asymptotically valid inferences on real-world policy performance. With a real2sim formalism, we can characterize the problem of combining real and simulation evaluations as a prediction-powered inference problem, and leverage finite-sample valid mean estimation algorithms. This pipeline allows us to evaluate the generalization capabilities of robot foundation models such as diffusion policy and \(\pi_0\). Compared to hardware-only evaluation, our method saves over \(20-25\%\) of hardware evaluations on average. 

\section{Limitations and Future Work}
\label{sec:limits}
While we introduce a non-asymptotically valid policy evaluation framework that allows us to rigorously combine real and simulation evaluations, the  simulation-real gap poses a challenge to the effectiveness of large-scale simulation. We list a few exciting directions for future work.

\textbf{Correlation between Simulation and Real.} Correcting for the bias in large-scale simulation with respect to the real-world in a statistically rigorous manner requires a paired dataset of simulation and real evaluations. The simulation–real gap also poses challenges for paired evaluations in the following ways. First, reliably predicting the real evaluation outcome for a specific initial condition is extremely challenging, as also elaborated in~\cite{pfaff2025scalable}. Secondly, real-world evaluations can be noisy --- repeated trials from the same initial condition can produce different outcomes due to inherent stochasticity and policy sensitivity to hard‑to‑control perturbations (e.g., lighting or object placement). Reducing this simulation-reality gap for evaluation along various axes of generalization (e.g., spatial, environmental factors, task) remains an important direction of future work. Additionally, data efficient methods of fine-tuning simulation evaluations to improve real-simulation correlation can be valuable.

\textbf{Action-conditioned video models.} Setting up physics-based simulations demands substantial effort and is difficult to scale or adapt to new environments and tasks. An appealing alternative is action-conditioned video world models, which can rapidly generate simulation scenes from text and image prompts~\cite{jang2025dreamgen}. As these models improve, it would be valuable to empirically assess the correlation in paired policy evaluations from world models.

\textbf{Actively sampling real environments for evaluation.} Our current framework uses random batch sampling of real environments for mean estimation. A promising extension would be to develop active sampling schemes that prioritize real evaluations in regions with larger real–simulation gaps. 

Finally, just as large-scale datasets~\cite{walke2023bridgedata,open_x_embodiment_rt_x_2023} have been pivotal for training robot foundation models, scaling evaluation will similarly require further investment in repositories of diverse manipulation tasks and environments~\cite{nasiriany2024robocasa,calli2017yale}, as well as scalable real2sim pipelines.

% \begin{itemize}
%     \item Is using this method useful for helping distinguish between policies?
%     \item As policies get better, how scalable is this approach?
%     \item Sensitivity to initial conditions is difficult. 
% \end{itemize}
% Finally, if the goal is to understand real-world performance, it is essential to be cautious when placing trust in large-scale simulation—particularly when simulation outcomes are not predictive of real-world behavior. A reliable simulator is valuable for making efficient decisions about which policy to deploy, but an inaccurate one can be misleading. The primary limitation of a paired setup is that, while it amplifies the utility of the paired evaluations, it does not provide insights into policy performance on distributions that differ from the distribution used for sampling real-world evaluation environments.

% Other sections
% \input{...}

\section{Acknowledgments} The authors would like to thank Tijana Zrnic, Anastasios Angelopoulos, and Allen Z. Ren for insightful discussions. The authors were partially supported by the NSF CAREER Award \(\#2044149\), the Office of Naval Research (N00014-23-1-2148), and the Sloan Fellowship. A. Badithela is supported by the Presidential Postdoctoral Research Fellowship at Princeton University. 
\clearpage

\bibliographystyle{ieeetr}
\bibliography{references.bib}

%===============================================================================
\clearpage
\beginappendix{
    % \section{Asymptotic Policy Performance Guarantees}

% \begin{figure}[h]
%     \centering
%     \begin{subfigure}[b]{0.45\textwidth}
%         \centering
%         \includegraphics[width=\textwidth]{sections/Figures/DP/asym_60.png}
%         \caption{Asymptotic confidence intervals}
%         \label{fig:subfig3}
%     \end{subfigure}
%     \hfill
%     \begin{subfigure}[b]{0.45\textwidth}
%         \centering
%         \includegraphics[width=\textwidth]{sections/Figures/DP/asym_coverage_n60.png}
%         \caption{Asymptotic Coverage}
%         \label{fig:subfig3}
%     \end{subfigure}
%     \caption{Asymptotic confidence intervals and associated coverage for the data corresponding to the single task diffusion policy.}
%     \label{fig:dp_asym_60}
% \end{figure}

\section{Summary Statistics from Experiments}
\label{sec:appendix_tab}
\begin{table}[h]
    \centering
    \small   
    \begin{tabular}{l|cccccccc}
        \toprule
        {Experiment}  & $n$ & $N$ &  $\rho$ & \(\frac{1}{n}\sum_{i=1}^n Y_i\) & \(\frac{1}{n}\sum_{i=1}^n f(\tilde{X}_i)\) & \(\frac{1}{N}\sum_{i=1}^N f(\tilde{X}_i)\) & \(\hat{\sigma}^2_{Y}\) & \(\hat{\sigma}^2_{Y - f}\) \\ 
        \midrule
        Diffusion Policy (Real2Sim) & 60 & 700 & 0.702 & 0.246 & 0.188 & 0.174 & 0.104 & 0.054 \\
        \(\pi_0\) (Real2Sim, moderate \(\rho\)) & 60 & 2100 & 0.588 & 0.825 & 0.820 & 0.772 & 0.138 & 0.090 \\
        \(\pi_0\) (Real2Sim, low \(\rho\)) & 60 & 2100 & -0.051 & 0.983 & 0.932 & 0.928 & 0.014 & 0.029\\
        \(\pi_0\) (Sim2Sim, high \(\rho\)) & 100 & 2000 & 0.974 & 0.751 & 0.731 & 0.732 & 0.116 & 0.006 \\
        \bottomrule
    \end{tabular}
    \caption{Summary statistics indicating the number of paired trials \(n\), additional simulation evaluations \(N\), Pearson correlation coefficient \(\rho\), sample real mean \(\frac{1}{n}\sum_{i=1}^n Y_i\), sample paired simulation mean \(\frac{1}{n}\sum_{i=1}^n f(\tilde{X}_i)\), sample additional simulation mean \(\frac{1}{N}\sum_{i=1}^N f(\tilde{X}_i)\), the sample real variance \(\hat{\sigma}^2_{Y}\), and the sample rectifier variance \(\hat{\sigma}^2_{Y - f}\). The reported statistics are averaged over \(100\) draws of data.}
    \label{tab:exp}
\end{table}
\section{Algorithms}

\begin{algorithm2e}[H]
\caption{Uniform Prediction Powered Inference (\textsc{UniformPPI})}
\label{alg:uniform_ppi}
\KwIn{Paired dataset $D_{\text{paired}}$, Simulation dataset $D_{\text{sim}}$, Sim outcomes $f$, counts $n, N$, significance level $\alpha$}
\KwOut{Confidence interval $CI$}
    \For{$i \gets 1$ \KwTo $n+N$}{
        \(\xi_i = 1\) if $X_i$ has a real evaluation, else $\xi_i=0$ \\
        \(\Delta_i = f(\tilde{X}_i) + \frac{n+N}{n}(Y_i - f(\tilde{X}_i))\cdot \xi_i\)
    }
    \(D_{\text{unif}} = \{\Delta_i\}_{i=1}^{n+N}\) \tcp*[f]{Problem Data}\\
    \(CI \gets \textsc{WSR}(D_{\text{unif}},\, \alpha = \alpha, L=-\tfrac{n+N}{n}, U=1 + \tfrac{n+N}{n})\) \tcp*[f]{Single call to WSR}\\
\Return{$CI$}
\end{algorithm2e}

\begin{algorithm2e}[H]
\caption{Two-Stage Prediction Powered Inference (\textsc{2-StagePPI})}
\label{alg:standard_ppi}
\KwIn{Paired dataset $D_{\text{paired}}$, Simulation dataset $D_{\text{sim}}$, levels $\alpha, \delta$}
\KwOut{Confidence interval $CI$}
    \((f_l, f_u) \gets \textsc{WSR}(D_{\text{sim}},\, \alpha = \delta, L=0, U=1)\) \tcp*[f]{Additional Simulation Confidence Interval} \\
    \For{$i \gets 1$ \KwTo $n$}{
        \(\Delta_i = Y_i - f(\tilde{X}_i)\)
    }
    \((R_l, R_u) \gets \textsc{WSR}(\{\Delta_i\}_{i=1}^n,\, \alpha = \alpha - \delta, L=-1, U=1)\) \tcp*[f]{Rectifier Confidence Interval} \\
    \(CI \gets (f_l -R_u, f_u -R_l)\) \tcp*[f]{Union bound} \\
\Return{$CI$}
\end{algorithm2e}

\begin{algorithm2e}[H]
\caption{Non-asymptotic mean estimation via Waudby-Smith Ramdas (WSR) Procedure~\cite{angelopoulos2023prediction,waudby2024estimating}} 
\label{alg:meanCI}
\KwData{Data points $\{Z_1,\dots,Z_n\}$, error level $\alpha \in (0,1)$, range $[L,U]$ such that $Z_i \in [L,U]$.}
\KwResult{Confidence interval $CI$ for the mean}
    \For{$i \gets 1$ \KwTo $n$}{
        $Z_i \gets (Z_i - L)/(U-L)$ \tcp*[f]{Normalize to $[0,1]$}
    }
    Construct fine grid $M_{\rm grid}$ over $[0,1]$ \\
    Initialize set of candidate means $\mathcal A \gets M_{\rm grid}$ \\
    
    \For{$t \gets 1$ \KwTo $n$}{
        $\hat{\mu}_t \gets \dfrac{0.5 + \sum_{j=1}^t Z_j}{t+1}$ \\
        $\hat{\sigma}_t^2 \gets \dfrac{0.25 + \sum_{j=1}^t (Z_j - \hat{\mu}_t)^2}{t+1}$ \\
        $\lambda_t \gets \sqrt{\dfrac{2\log(2/\alpha)}{n\hat{\sigma}_{t-1}^2}}$ \\
        
        \For{$m \in  M_{\rm grid}$}{
            \Comment{In computing the martingales, we choose the hyperparameter \(c = 0.99\) due to its empirical performance} \\ 
            $M_t^+(m) \gets \Bigl(1+\min\bigl(\lambda_t, \tfrac{c}{m}\bigr)(Z_t - m)\Bigr) M^+_{t-1}(m)$  \\
            $M_t^-(m) \gets \Bigl(1-\min\bigl(\lambda_t, \tfrac{c}{1-m}\bigr)(Z_t - m)\Bigr) M^-_{t-1}(m)$ \\
            $M_t(m) \gets \tfrac{1}{2} \max\{M_t^+(m), M_t^-(m)\}$ \tcp*[f]{Martingale} \\
            
            \If{$M_t(m) \geq 1/\alpha$}{
                $\mathcal A \gets \mathcal A \setminus \{m\}$ \tcp*[f]{Remove $m$ from set of candidate means}
            }
        }
    }
$C_\alpha = \{m(U-L)+L : m \in \mathcal A\}$ \tcp*[f]{True mean lies in this set with high probability}\\
$CI = [\max\{0, \min{C_{\alpha}}\}, \min\{1, \max{C_{\alpha}}\}]$ \\
\Return{CI}
\end{algorithm2e}

\section{Choosing the Confidence For Each Interval}
 The allocation of risk between the rectifier and simulator intervals can be approximately optimized in an efficient manner. The sub-Gaussian nature of the respective means ensures that the interval growth rate with respect to increasing confidence is monotonic and increasing. Thus, there is an approximate equilibrium that can be found via binary search, in which the rate of width increase in the rectifier (resp., simulator) is precisely offset by the rate of interval shrinkage in the simulator (resp., rectifier). On either ``side'' of this equilibrium, the rate of growth of one of the intervals outpaces the rate of shrinkage of the other, making the landscape approximately convex. Generally speaking, the optimal allocation in practical situations has $\delta \approx 0.9\alpha$.

\section{Evaluation on Artificial Data}
\label{appendix:artificial_data}
In order to investigate counterfactual properties of the evaluation methods, we test all method using artificial (simulated) data with known statistical properties. This is \textbf{not} data generated by a physics-based \emph{robot simulator}, but is rather simulated i.i.d. draws of \emph{scalar random variables} with known statistical properties. We term this data ``artificial'' in order to avoid any confusion with the the simulator predictions in \Cref{sec:experiments}. 

\subsection{Value of Artificial Data and Research Questions}
\label{artificial_data:justification}
Practical estimation problems arise precisely because the investigator does not have access to the true statistical parameter in question (in this case, the mean performance). Thus, when evaluating on real data as in \Cref{sec:experiments}, we cannot verify whether the confidence intervals we generate -- or those generated by any baseline procedure -- actually contain the true mean. As such, using artificial data allows for the important step of verifying the theoretical claims of each method in practice, so that they may profitably be used on such problems as may be encountered in, for example, the sciences and engineering. Furthermore, access to the ``true parameter labels'' for artificial data allow us to efficiently pose hundreds or even thousands of estimation problems reflective of varying contexts, which inform the reader as to the best method for their particular application. 

The core additional technical objection this must raise is the degree to which the simulated data fails to represent some data that may be observed by the practitioner; necessarily, it is impossible to sample from, and validate against, \emph{all distributions} -- certainly in finite time. Addressing this problem will be crucial to effective characterization and evaluation. 

To the aforementioned ends, we provide the following core analyses via the evaluations on artificial data:
\begin{itemize}
    \item A justification of the generality of our data generation process with respect to key parameters;
    \item A discussion of the most informative metrics in evaluating estimation procedures;
    \item An investigation of the effectiveness of all methods subject to variation in the key parameters;
    \item A brief discussion and usage guide for the strengths of each method, and interpretable scenarios in which one should likely be preferred to the others. 
\end{itemize}

\subsection{The Key Parameters and Data Generation}
\label{artificial_data:generation}
As introduced in \Cref{sec:problem}, the robot evaluation problem tackled here is a special case of a more general mean estimation problem in statistics. The canonical Neyman-Pearson framework for understanding these estimation problems relies on several key parameters: the batch size ($n$) and the significance level ($\alpha$). As introduced in \Cref{sec:ppi}, we are interested in using the information contained in proxy signals (e.g., simulators) to effectively increase the sample size of real evaluations. Thus, this procedure \emph{also depends} on the amount of proxy data ($N$) and the degree to which the simulator is ``useful'' -- informally, the amount of additional information contained in the proxy variables. 

This last piece of information is of course key to the investigation. We argue that, consistent with the analysis of control variates methods, the critical measure by which the proxy variable improves nonasymptotic (finite-sample) confidence interval generation is in the variance reduction of the (unbiased) mean estimator. Intuitively, such a reduction tightens the confidence intervals while ensuring Type-1 error control at all data scales. With this in mind, we use as the core ``effectiveness measure'' the Pearson correlation coefficient, $\rho$, which is a direct ratio of the real-to-proxy covariance to their geometric mean variance. This intuition is reflected directly in the control variates analysis of \cite{luo2025leveraging}, particularly with respect to their Theorem 1. 

Given the preceding discussion, we intend to investigate the relative advantages of each estimation procedure as a function of the four stated parameters: $\alpha$, $n$, $N$, and $\rho$. To generate artificial data, we construct artificial real data of size $k(n+N)$ drawn uniformly in the interval $[\max\{0, 2\mu-1\}, \min\{ 2\mu, 1\}]$. This enforces a tunable true population mean $\mu$ while ensuring that the data is always bounded in $[0, 1]$.\footnote{To avoid unnecessary subtleties around the effects of interval truncation at $0$ and $1$ on the aggregate interval width metrics, we will in general set the means to be equal to $0.5$.} The proxy data requires a desired value $\rho^*$. To generate the artificial proxy data, the real data is copied, shifted to mean $\mu_{sim}$, and then iteratively perturbed by small amounts of random noise or small perfect-signal gradients in order to push the empirical correlation to $\rho$. Matched and unmatched datasets of respective size $n$ and $N$ are drawn from partitions of the large dataset; for the unmatched data, the real labels are discarded for the purposes of running the algorithms. To save time, this single large dataset can be sampled from repeatedly (i.e., bootstrapped), or new datasets can be generated for each experiment. We opt for the latter, though it is more time-consuming in practice for empirically negligible effects. 

\textbf{A Brief Discussion of Estimation Metrics}
\label{artificial_data:what_metric}
The ``proper'' metrics to report for the problem of mean estimation admits a wide array of context-relevant options. We argue for the metrics herein, and attempt to briefly justify the preference. 

\textbf{The Natural Option: Interval Widths}
The ultimate purpose of estimation in our context is to minimize the region of uncertainty in which the true parameter lies (subject to a tunable risk of error); this is a dual result to many ``operationalizable" uses, including tests of maximal efficiency and power, certification in the least number of trials, etc. As such, it is unsurprising that reporting interval width directly is the most natural metric, and is our primary metric of choice in this work. 

\textbf{A Caution About Variance Minimization}
Another natural metric, albeit one slightly upstream of the intended methodological usage, is estimator variance. This analysis is interchangeable with the interval width (i.e., equivalent under monotonic transformations), but \emph{only when the space of estimators is constrained to be unbiased}. Unbiased estimators overwhelmingly dominate among methods used in practice, but analysis can be misleading when the constraint is not satisfied. A minimum-variance estimator is \emph{essentially meaningless} (for example, the estimator `5' has no variance over the draw of the data); a minimum-variance \emph{unbiased} estimator, on the other hand, can be exceedingly novel and useful. 

\textbf{Dependence on Significance Level}
Bounded random variables are a special case of random variables with bounded moments. These random variables are sub-Gaussian, and therefore any optimal interval widths (across data scales) should be able to attain poly-logarithmic dependence on the significance level.

\subsection{Results on Artificial Data}
\label{app:artificial_data:results}
We illustrate the three aforementioned themes in evaluation over artificial data. All results will report interval widths as the primary metric (Theme 1), and will discuss the downside of additional metrics via the example cases. Second, the limitations of variance minimization will be illustrated (Theme 2), which will also inform our investigation of each method's coverage (i.e., enforcing the Type-I error control constraint in \Cref{eq:guarantee}). Finally, we will sketch the gap in efficiency with respect to confidence or significance level, which has implications for different types of validation settings encountered in practice. To be specific, we will discuss in particular the implications of statistical guarantees in safety-critical certification and evaluation paradigms. 

In order to avoid certain distracting or confounding phenomena, we present intervals for data with characteristics designed to highlight the fundamental behavior of the algorithms. Specifically: the real data and simulator data means are set arbitrarily to 0.5 each, in order to minimize instances of truncation of the intervals at 0 or 1. Truncation does not in general benefit any particular method, but it does increase variation in interval widths that can make the results noisier. As these results are purely designed to validate existential and not universal quantifications of algorithm behavior, this choice does not bias the result and discussion. 

\textbf{Interval Width as Simulator Data or Correlation Grows}
\label{app:artificial_data:width_vs_nsim}
We begin by validating the behavior of each algorithm seen in \Cref{sec:experiments}. The artificial data is iteratively redrawn for $n=100$ and varying levels of $N$ up to $10$k additional simulation runs. We first consider a case of relative strength for \methodControlVariate, taking a large correlation $\rho = 0.97$ and varying $\alpha$ across approximately two orders of magnitude. As shown in \Cref{fig:width_vs_nsim}, every method has near-monotonic improvement (in expectation) as the amount of additional sim data grows, reflecting the intuition that there must be more `information' being given to the evaluator. However, the intervals do not asymptote to zero, indicating that, even so, there remains fundamental uncertainty in linking the sim data to the real data (the rectifier uncertainty) that is \emph{irreduceable} given a fixed amount of real data. In other words, we cannot trust the sim data to an arbitrary degree, but can still use the data productively to tighten the intervals. As shown in~\Cref{fig:width_vs_nsim,fig:width_vs_rho}, the \methodPPIUnif family of methods recover the real-simulation gap determined by the number of paired trials \(n\). In contrast, the two-stage methods, \methodPPINonAsym and \methodUB, remain inefficient even at \(N=10000\), requiring an even higher number of simulation evaluations to converge.

\begin{figure}[htbp]
    \centering
    \begin{subfigure}[b]{0.48\textwidth}
        \centering
        \includegraphics[width=\textwidth]{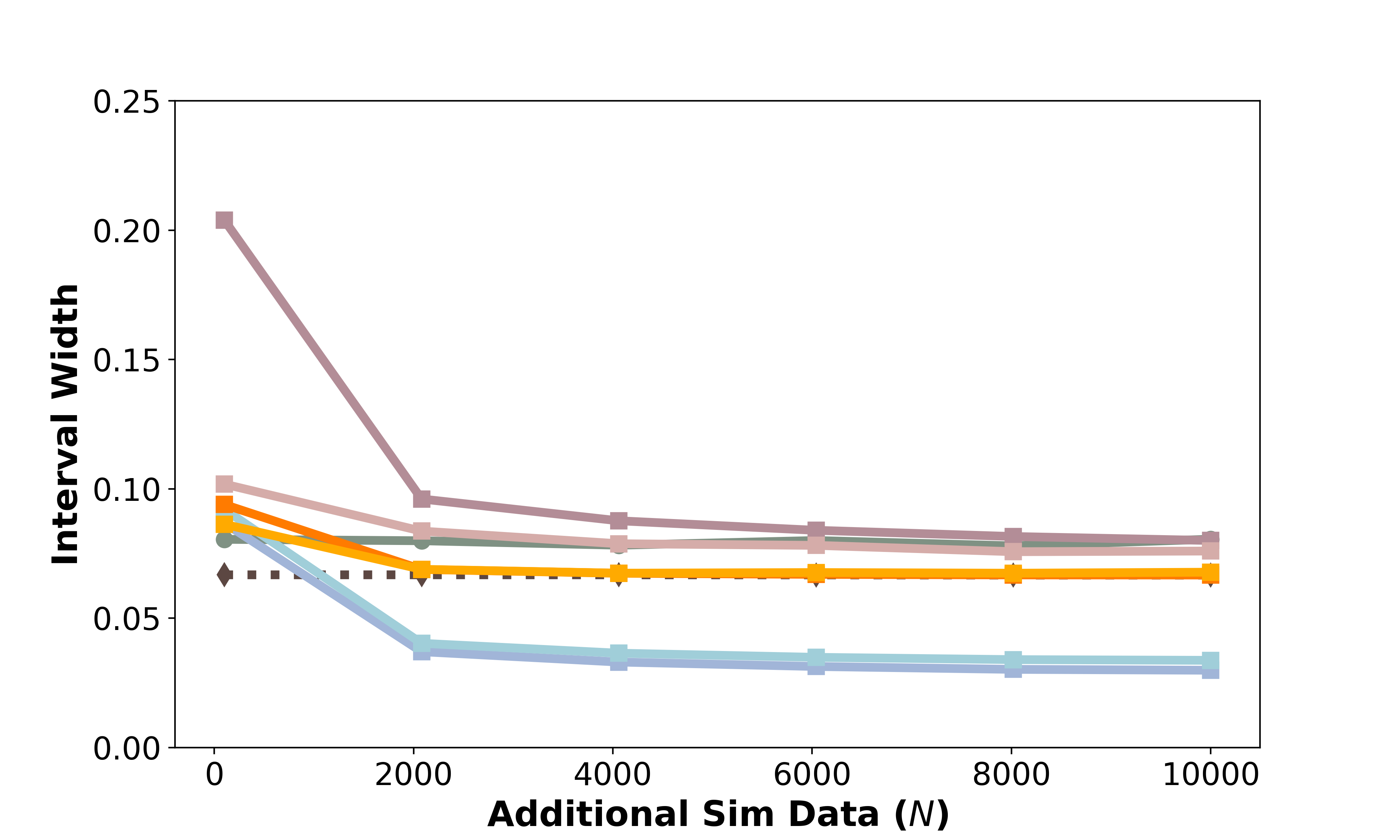}
        \caption{Interval width vs ($N$): $\alpha = 0.1$, $\rho=0.97$}
        \label{fig:width_vs_nsim_alpha_0.1}
    \end{subfigure}
    \hfill
    \begin{subfigure}[b]{0.48\textwidth}
        \centering
        \includegraphics[width=\textwidth]{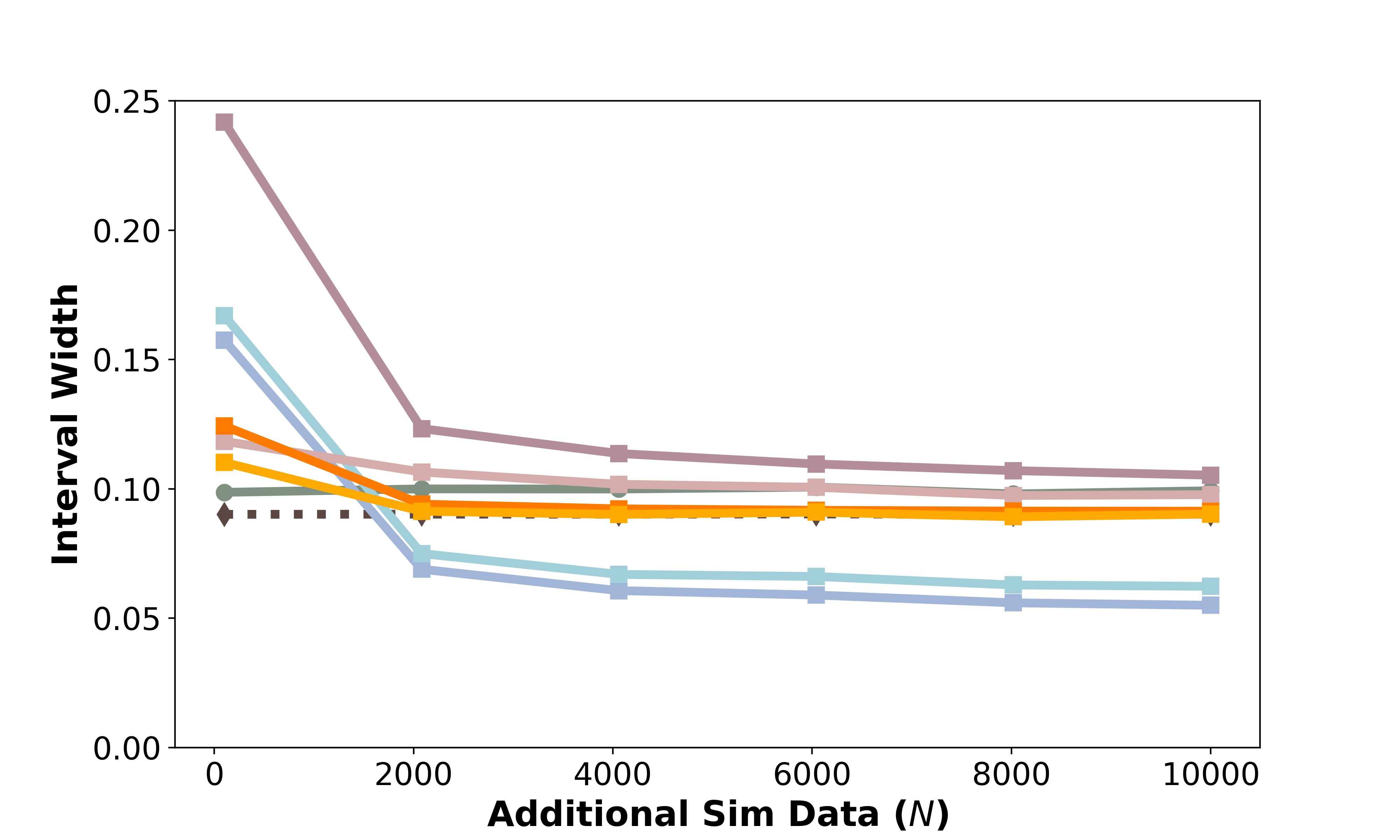}
        \caption{Interval width vs ($N$): $\alpha = 0.03$, $\rho=0.97$}
        \label{fig:width_vs_nsim_alpha_0.03}
    \end{subfigure} \\
    \begin{subfigure}[b]{0.48\textwidth}
        \centering
        \includegraphics[width=\textwidth]{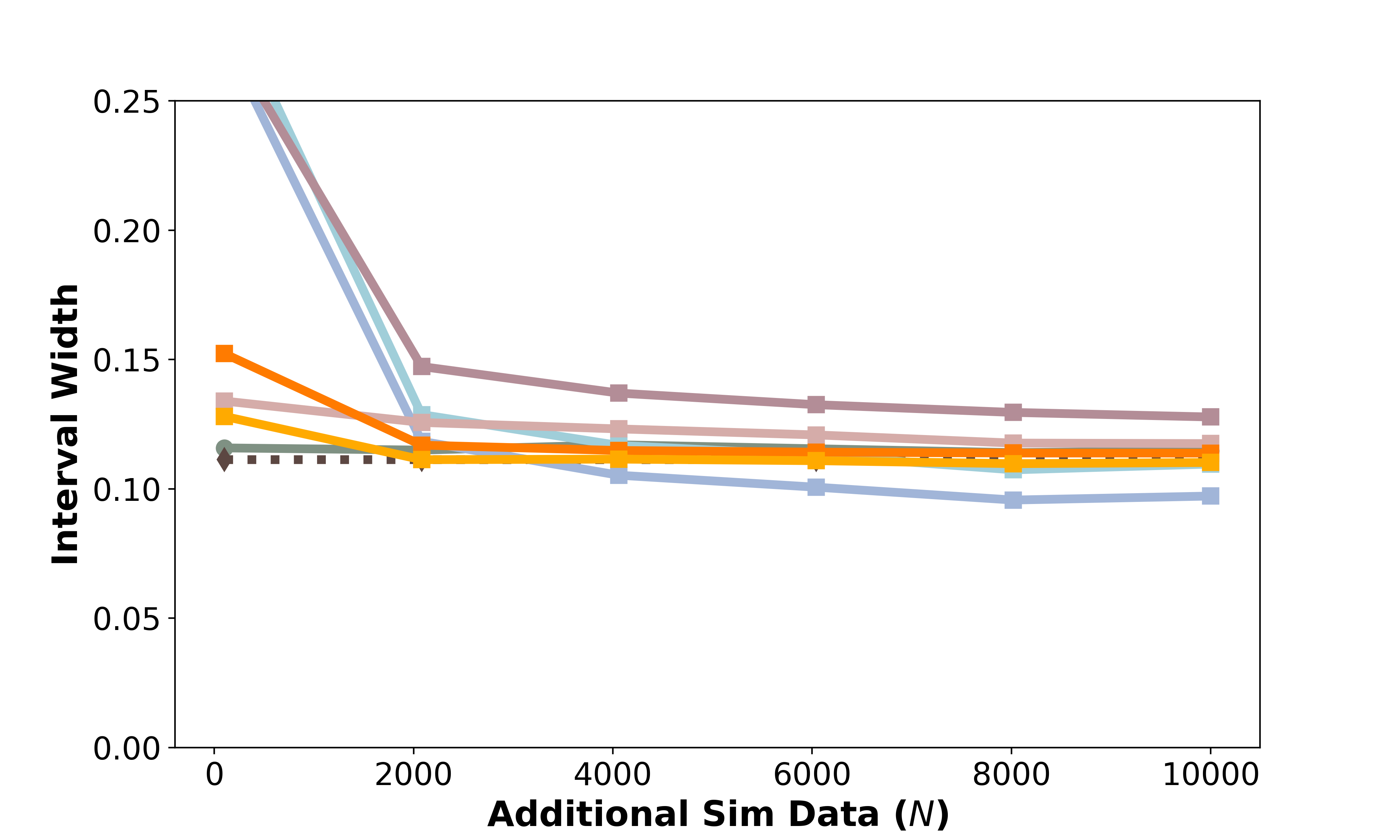}
        \caption{Interval width vs ($N$): $\alpha = 0.01$, $\rho=0.97$}
        \label{fig:width_vs_nsim_alpha_0.01}
    \end{subfigure}
    \hfill
    \begin{subfigure}[b]{0.48\textwidth}
        \centering
        \includegraphics[width=\textwidth]{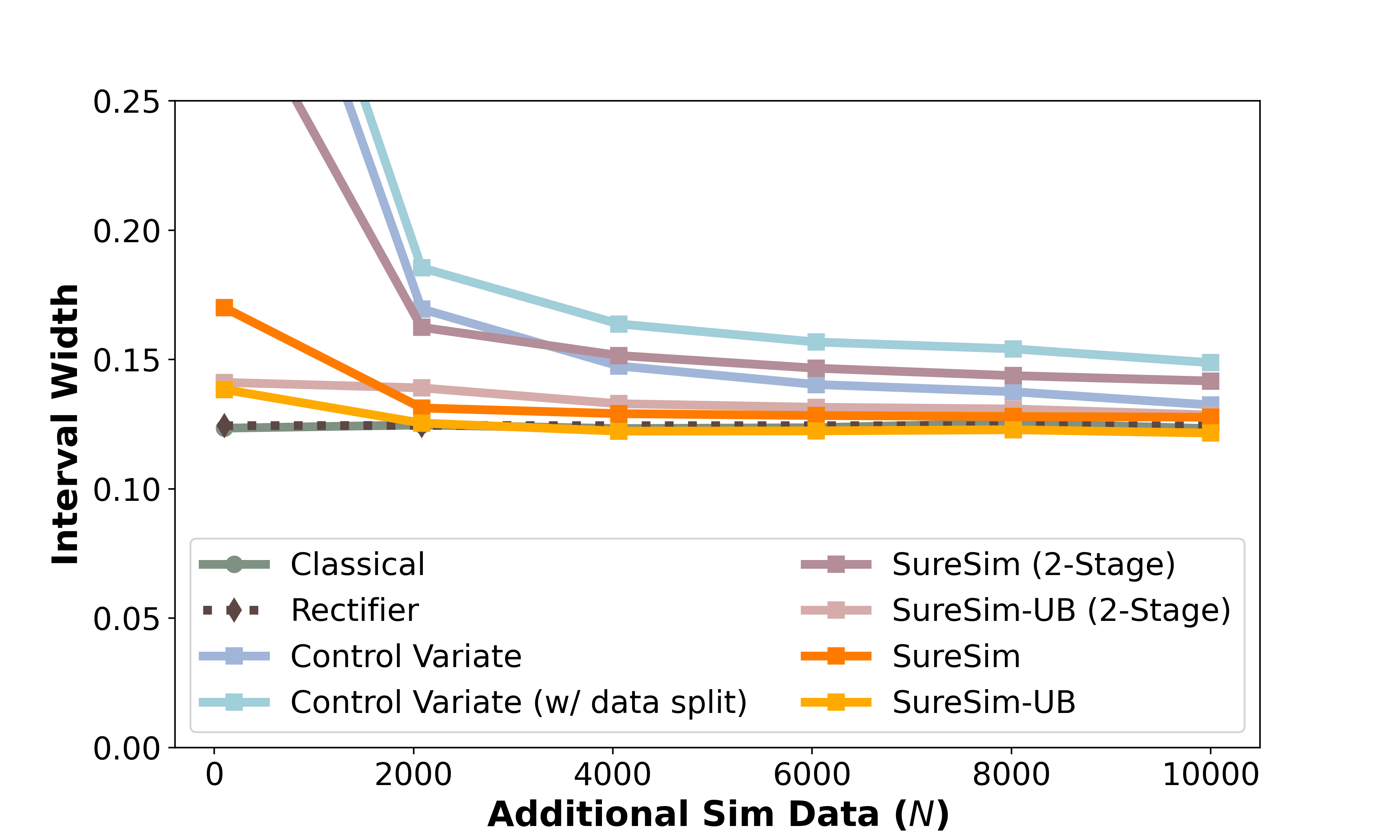}
        \caption{Interval width vs ($N$): $\alpha = 0.005$, $\rho=0.97$}
        \label{fig:width_vs_nsim_alpha_0.004}
    \end{subfigure}
    \caption{Interval widths on all methods for artificial data. Each plot shows the width against varying $N$ ($n = 100$). Results averaged over 100 independent redraws of data. The desired confidence level increases ($\alpha$ decreases) left-to-right, top-to-bottom. Note that \methodControlVariate constructs tighter intervals at large $\alpha$, but that the interval widths are much more sensitive as $\alpha$ changes. As will be shown in \Cref{fig:coverage_vs_nsim}, the biased nature of the CV estimator leads to miscoverage in regimes for which its intervals appear to be narrower. }
    \label{fig:width_vs_nsim}
\end{figure}
In \Cref{fig:width_vs_rho}, we generalize these results to variations across the true correlation between real data and simulation. Naturally, higher correlation implies greater signal in the proxy (simulation) data, and therefore more achievable tightening. This also validates the analysis of monotonic and quadratic \methodControlVariate interval width scaling given in \cite{luo2025leveraging}. Note that the numbers in \Cref{fig:width_vs_nsim} correspond to nearly the right-most points of the curves in \Cref{fig:width_vs_rho}.\footnote{This statement is modulo the small differences in $\alpha$ for two of the plots, which differed in order to allow us to illustrate qualitatively different coverage behavior for \methodControlVariate in \Cref{fig:coverage_vs_nsim}.} Additionally, even if the 

\begin{figure}[h]
    \centering
    \begin{subfigure}[b]{0.48\textwidth}
        \centering
        \includegraphics[width=\textwidth]{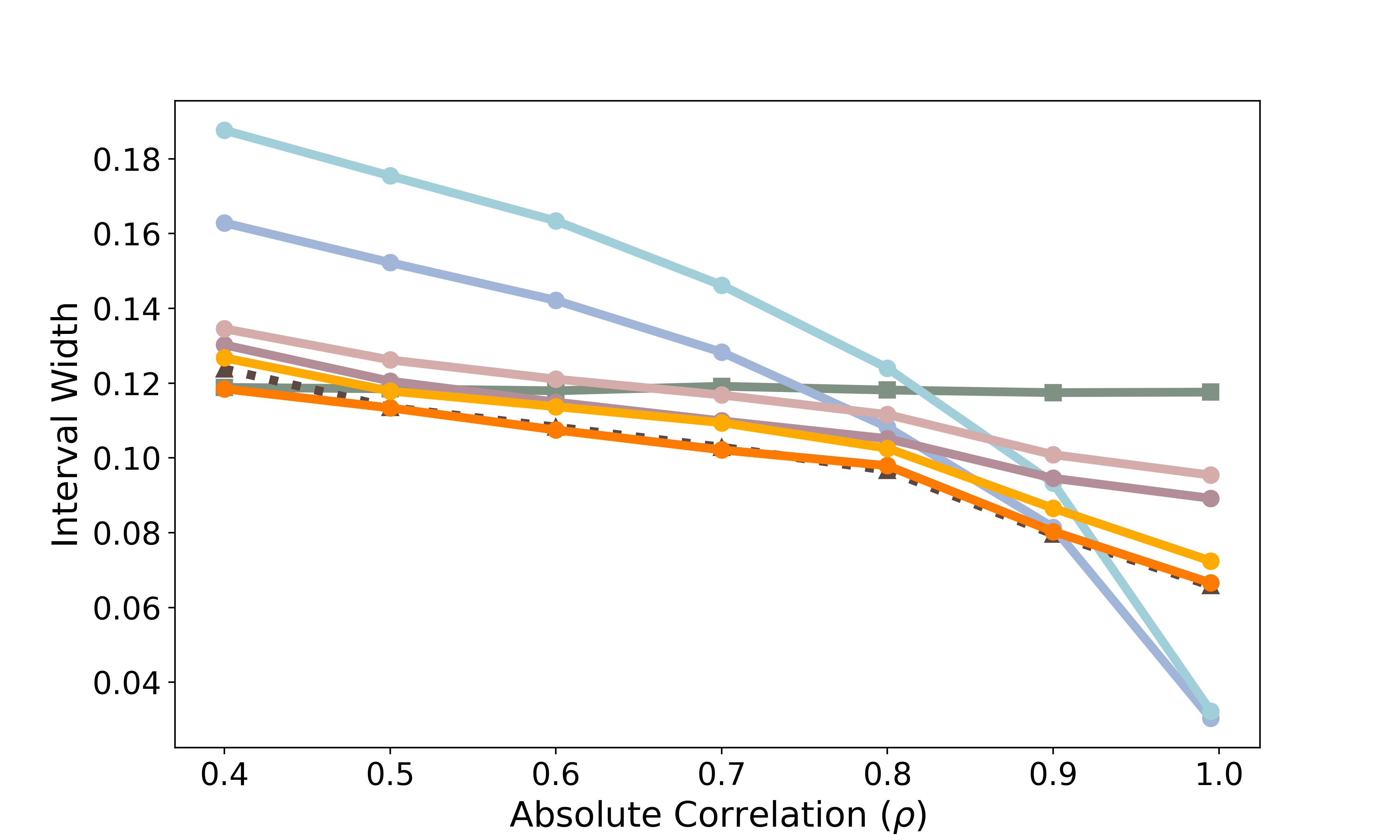}
        \caption{Interval width vs ($\rho$); $n=100$, $N=5000$, $\alpha = 0.1$}
        \label{fig:width_vs_rho_alpha_0.1}
    \end{subfigure}
    \hfill
    \begin{subfigure}[b]{0.48\textwidth}
        \centering
        \includegraphics[width=\textwidth]{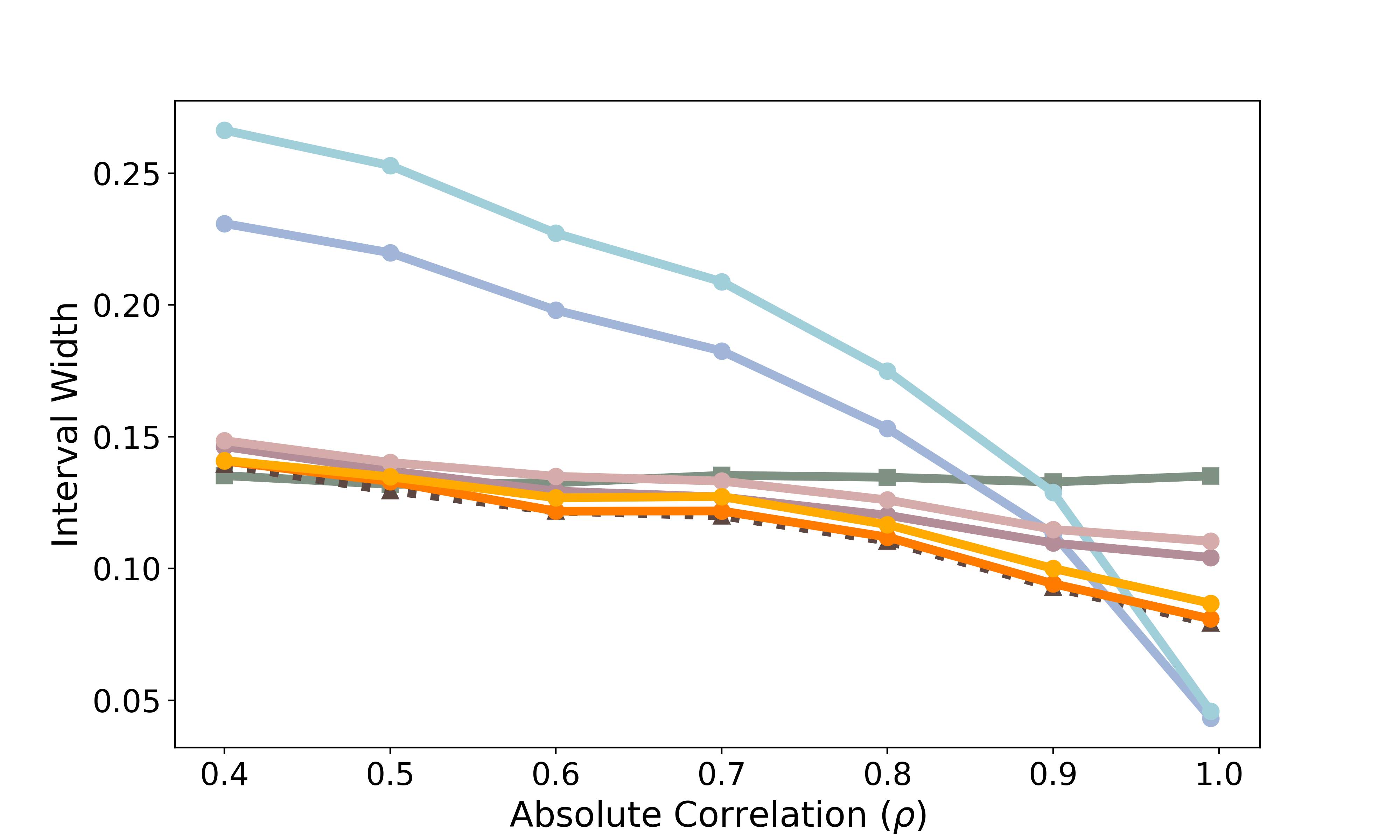}
        \caption{Interval width vs ($\rho$); $n=100$, $N=5000$, $\alpha = 0.05$}
        \label{fig:width_vs_rho_alpha_0.05}
    \end{subfigure} \\
    \begin{subfigure}[b]{0.48\textwidth}
        \centering
        \includegraphics[width=\textwidth]{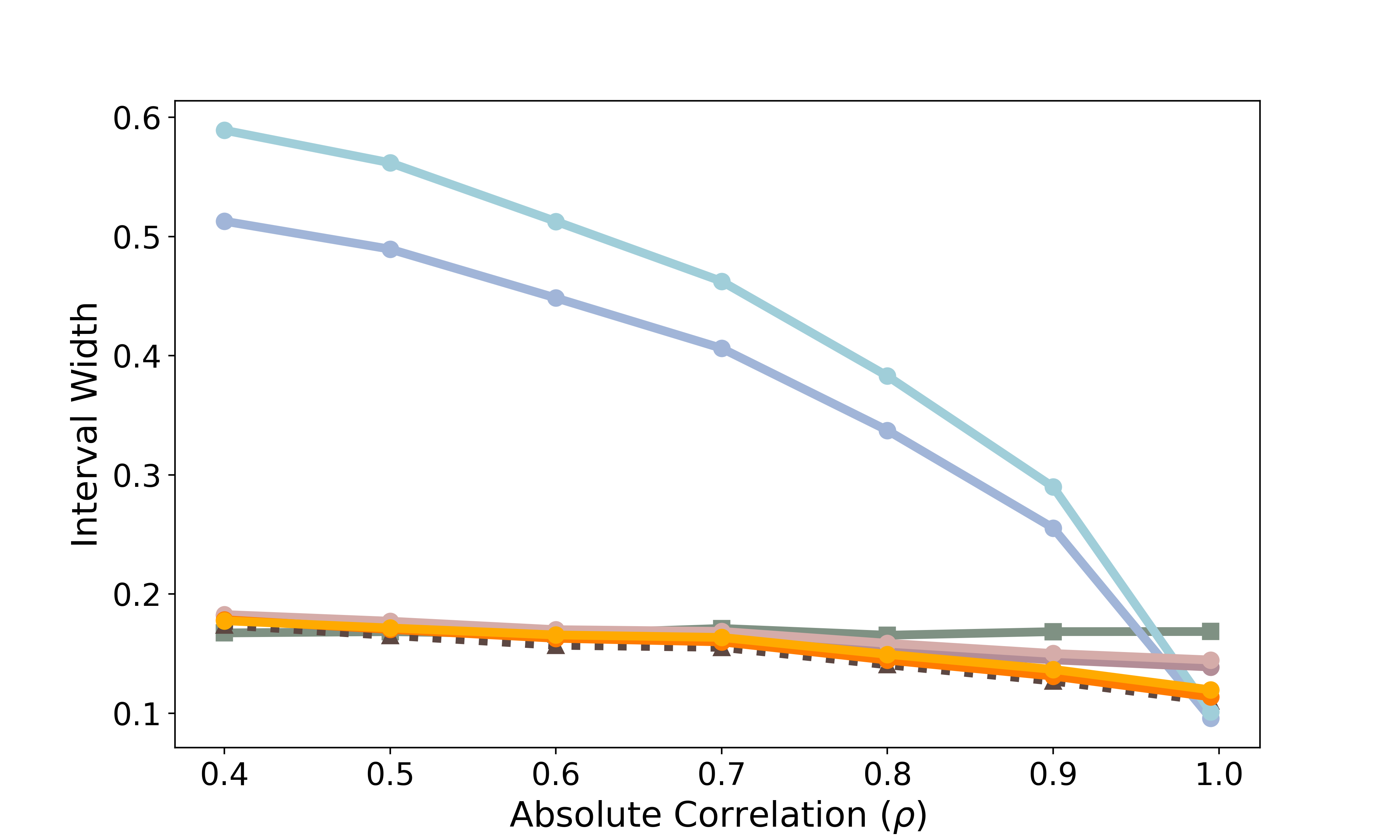}
        \caption{Interval width vs ($\rho$); $n=100$, $N=5000$, $\alpha = 0.01$}
        \label{fig:width_vs_rho_alpha_0.01}
    \end{subfigure}
    \hfill
    \begin{subfigure}[b]{0.48\textwidth}
        \centering
        \includegraphics[width=\textwidth]{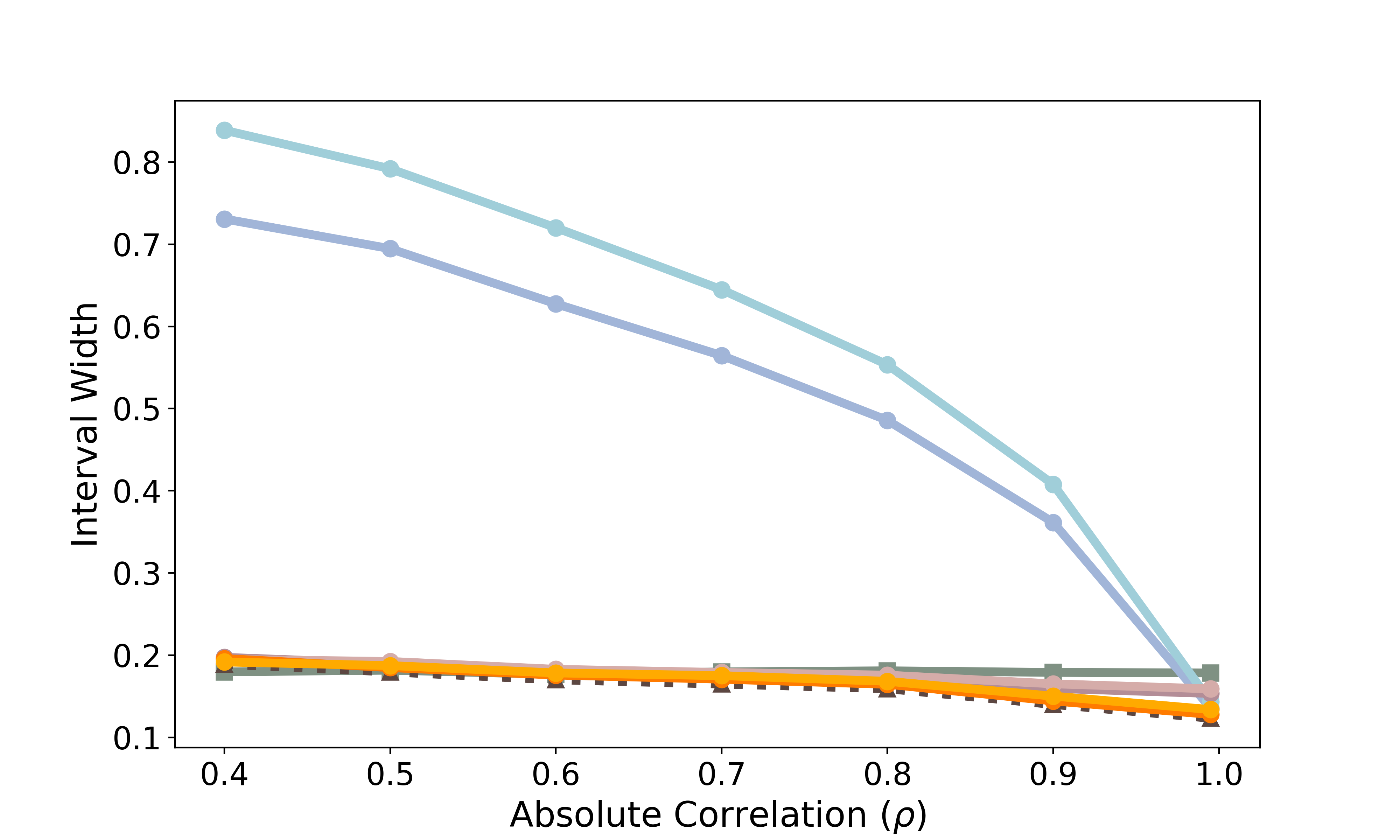}
        \caption{Interval width vs ($\rho$); $n=100$, $N=5000$, $\alpha = 0.005$}
        \label{fig:width_vs_rho_alpha_0.005}
    \end{subfigure}
    \caption{Interval widths versus true data correlation between real and paired data. As correlation increases, intervals narrow due to the greater amount of signal present. Results averaged over 100 independent redraws of data. As can be seen, \methodControlVariate has greater sensitivity to the true correlation because of the direct correspondence of $\rho$ to the attainable rectifier variance. However, the construction comes at a significant cost in lower-correlation regimes, and again illustrates strong sensitivity to $\alpha$. Furthermore, as noted in \Cref{fig:coverage_vs_nsim}, cases of very high correlation often result in miscoverage using the standard control variates implementation with Chebyshev's Inequality \cite{luo2025leveraging}.}
    \label{fig:width_vs_rho}
\end{figure}

\textbf{Coverage as Simulator Data  Grows}
\label{app:artificial_data:width_vs_nsim}
We now investigate the second thematic point, on the limitations of variance minimization as a certification for estimation efficiency. The first key comment pertains to the resulting interval coverage, to which we have access by virtue of constructing the artificial data and knowing its key features (including the mean). 

Variance minimization is generally synonymous with improving the estimator efficiency -- i.e., shrinking the interval width -- \emph{but only so long as the intervals enforce validity}. As shown in \Cref{fig:coverage_vs_nsim}, this is a challenge for \methodControlVariate, because the technique is \emph{not unbiased}. Thus, it is susceptible to excessive optimism (``trusting the simulator too much"), which leads to miscoverage when the amount of simulator data grows. As shown, this effect is most pronounced precisely when \methodControlVariate is relatively strongest (at larger $\alpha$). Importantly, for practical problems, the evaluator cannot know whether they are in an excessively optimistic regime; this is precisely the reason for enforcing \Cref{eq:guarantee} as a property of the evaluation procedure. As shown, such methods cover uniformly, while being generally efficient (recovering the rectifier variance) across different levels of $\alpha$ and $\rho$. 
\begin{figure}[h]
    \centering
    \begin{subfigure}[b]{0.48\textwidth}
        \centering
        \includegraphics[width=\textwidth]{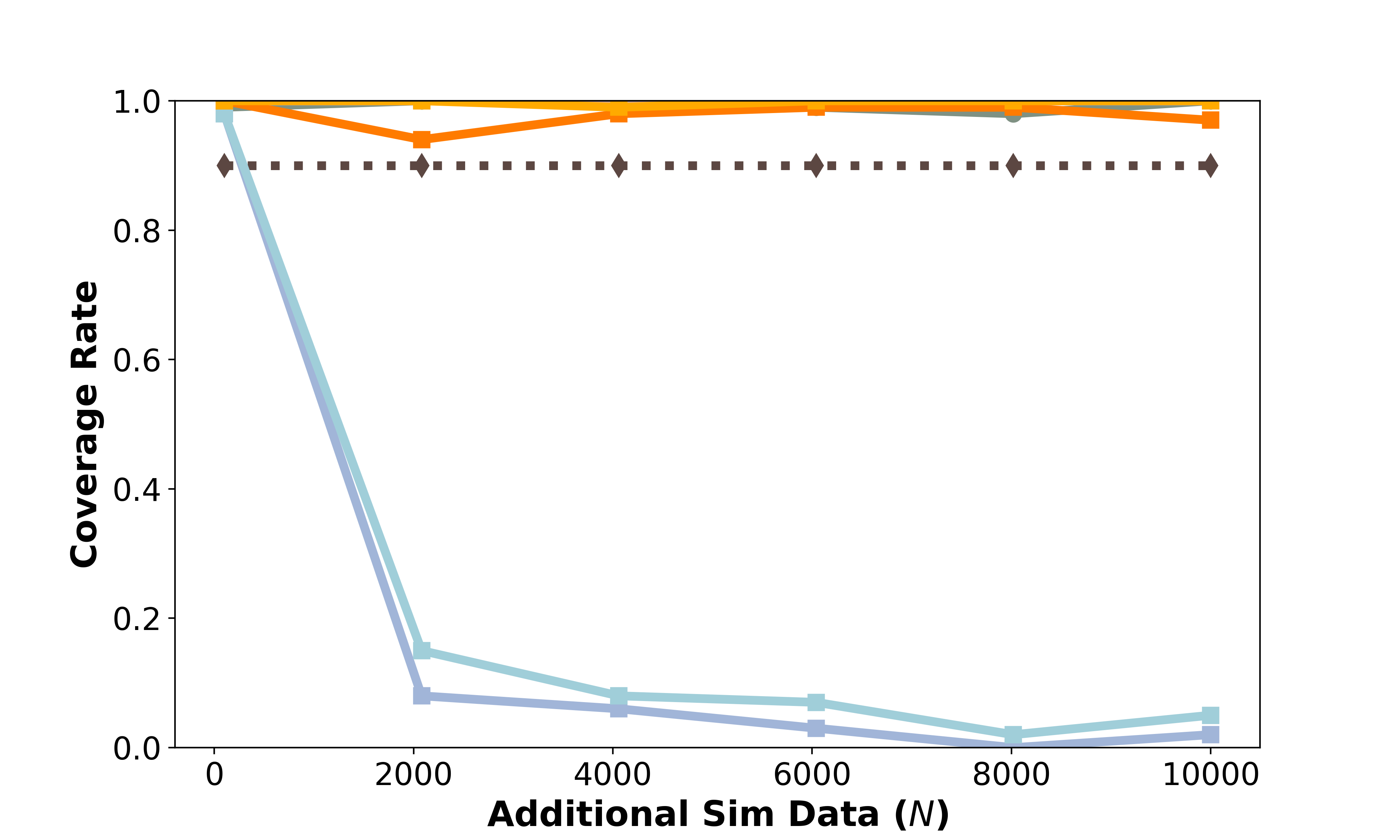}
        \caption{Coverage Rate vs ($N$): $\alpha = 0.1$, $\rho=0.97$}
        \label{fig:coverage_vs_nsim_alpha_0.1}
    \end{subfigure}
    \hfill
    \begin{subfigure}[b]{0.48\textwidth}
        \centering
        \includegraphics[width=\textwidth]{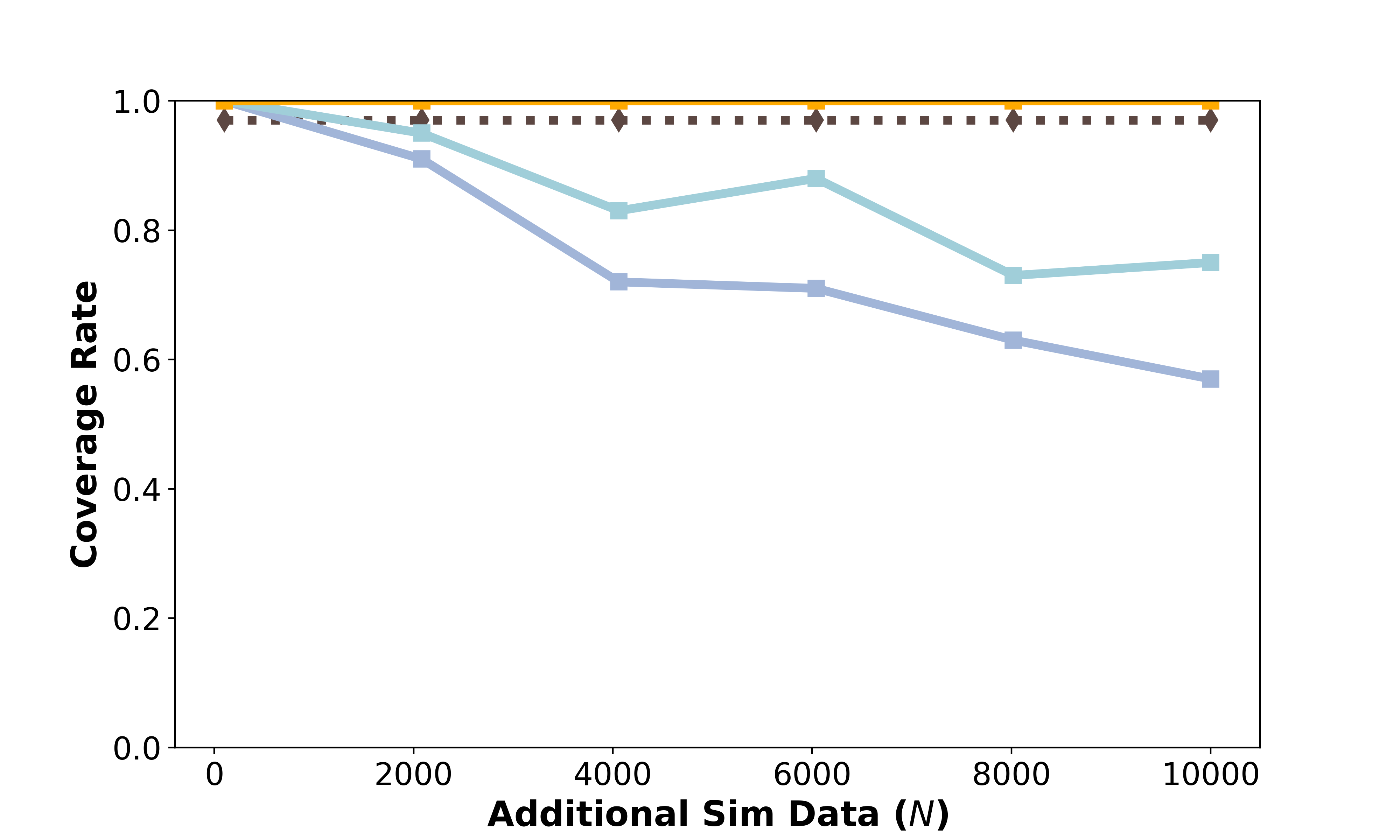}
        \caption{Coverage Rate vs ($N$): $\alpha = 0.03$, $\rho=0.97$}
        \label{fig:coverage_vs_nsim_alpha_0.03}
    \end{subfigure} \\
    \begin{subfigure}[b]{0.48\textwidth}
        \centering
        \includegraphics[width=\textwidth]{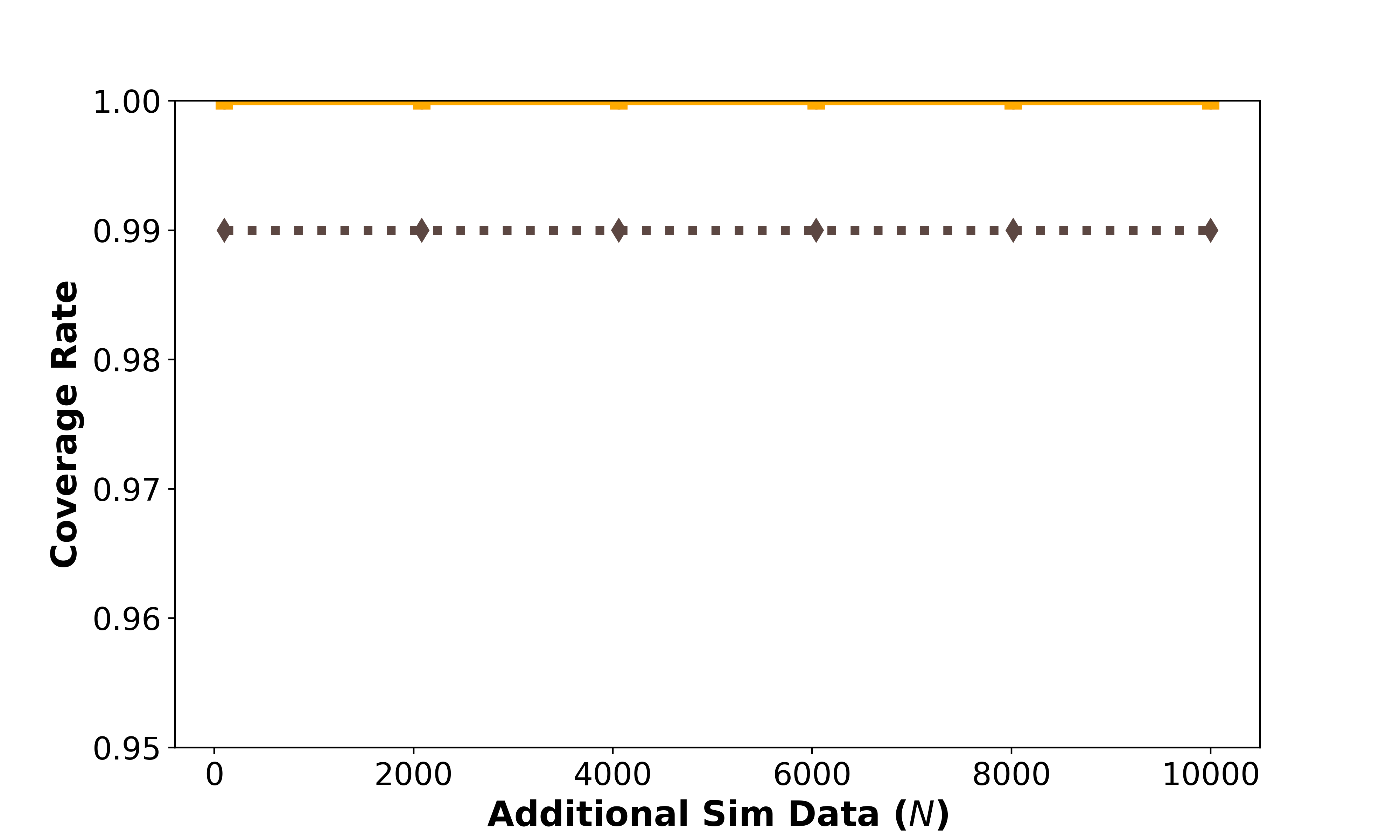}
        \caption{Coverage Rate vs ($N$): $\alpha = 0.01$, $\rho=0.97$}
        \label{fig:coverage_vs_nsim_alpha_0.01}
    \end{subfigure}
    \hfill
    \begin{subfigure}[b]{0.48\textwidth}
        \centering
        \includegraphics[width=\textwidth]{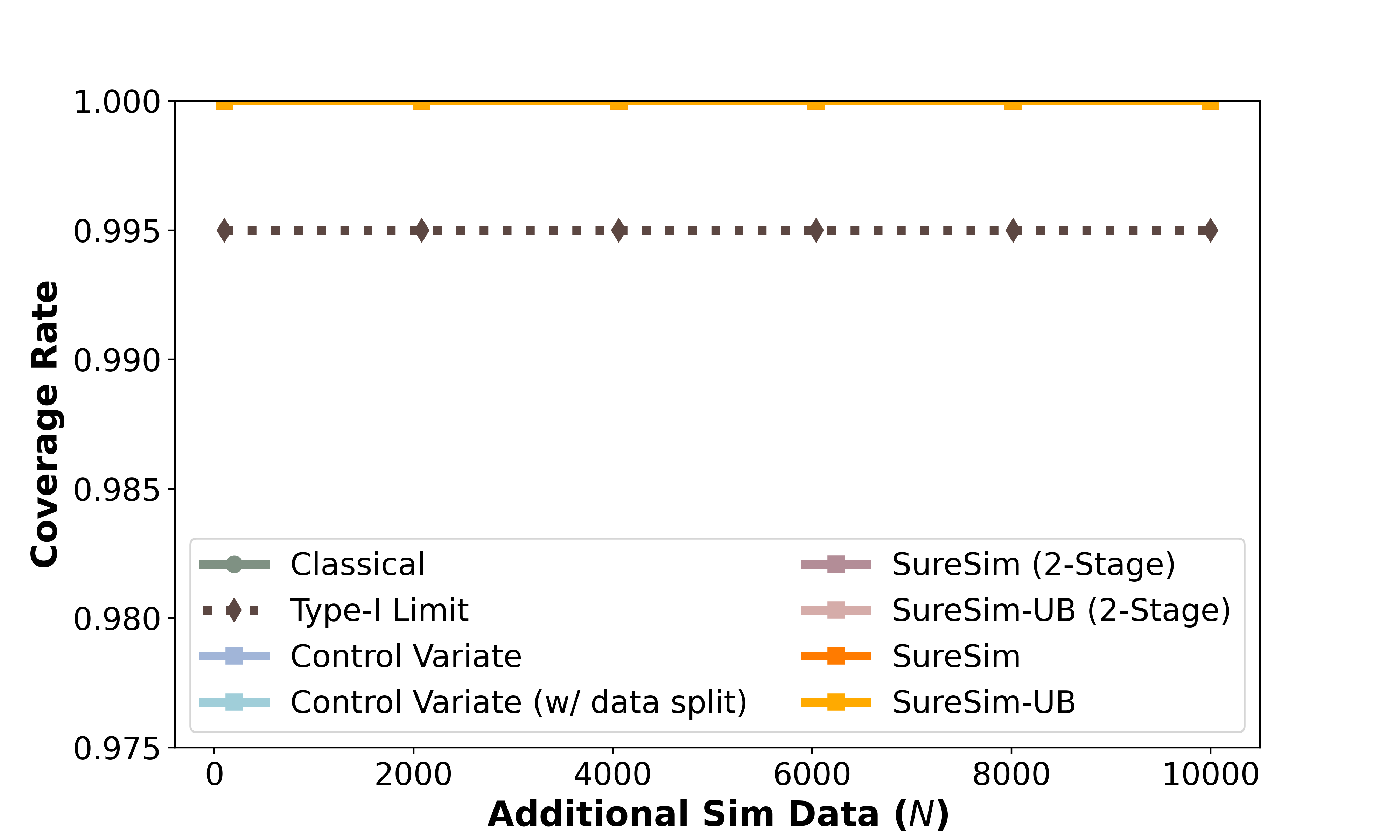}
        \caption{Coverage Rate vs ($N$): $\alpha = 0.005$, $\rho=0.97$}
        \label{fig:coverage_vs_nsim_alpha_0.004}
    \end{subfigure}
    \caption{Interval coverage rate on all methods for artificial data. Each plot shows the coverage rate against varying $N$ ($n = 100$). Results averaged over 100 independent redraws of data. The desired confidence level increases ($\alpha$ decreases) left-to-right, top-to-bottom. First, every provably nonasymptotically valid method covers in every regime, as expected. By contrast, note that \methodControlVariate fails to cover in the top row as the amount of simulation data grows; this is a result of bias in the estimator causing inconsistency. Importantly: it is \emph{precisely when \methodControlVariate intervals become narrower than our methods that they lose validity}. Thus, the only valid instance of  empirical improvement of \methodControlVariate over our procedure on this data is in the case $\alpha = 0.01$; crucially, however, the practitioner cannot know for their problem whether they are in this regime (as for different problem instances, the critical value of $\alpha$ will differ in general from $0.01$). Thus, the practitioner may be in the (narrow) valid and efficient regime, or may be in the invalid (too-optimistic) regime, or in the inefficient regime -- and will not be able to distinguish which one. }
    \label{fig:coverage_vs_nsim}
\end{figure}
\textbf{Interval Width versus Significance Level}
\label{app:artificial_data:significance}
Finally, we evaluate the effect of changes in significance level on the interval scaling. As the \methodPPIUnif and \methodPPINonAsym family of procedures apply WSR\cite{waudby2024estimating}, which admits bounded (and therefore, sub-Gaussian) random variables, these methods can recover $\mathcal{O}(\log{\frac{1}{\alpha}})$ scaling as $\alpha \rightarrow 0^+$. This is broadly indicative of Hoeffding- or Bernstein-type concentration bounds. By contrast, \methodControlVariate is slightly more general (not requiring boundedness), but loses this scaling rate as a consequence. This explains the significantly increased sensitivity to $\alpha$ (specifically, $\mathcal{O}(\frac{1}{\sqrt{\alpha}})$ scaling) obtained via Chebyshev's inequality. 

The tradeoffs of this design choice can be seen in several intuitive ways. First, naturally-unbounded metrics (e.g., log likelihoods) are more suited to \methodControlVariate. However, from a practical standpoint, guarantees requiring very high confidence (often the best that statistical assurances can achieve for safety-critical applications) will scale much more efficiently with our methods; that is, \methodControlVariate will very often yield vacuous intervals for, e.g., $\alpha < 0.0001$, especially when $n$ is constrained. The method can tighten this by instead using a Hoeffding-type bound, but then loses the generality that sets it apart from WSR-based procedures, as it must also enforce a boundedness constraint.

\subsection{Key Takeaways}
\label{artificial_data:conclusion}
A recurring theme of the methodological comparison given in the preceding sections is the relative strength of \methodControlVariate procedures for (a) less stringent significance requirements (higher $\alpha$), and (b) higher correlation $\rho$. Intuitively, it is better able to exploit `easier' settings to tighten intervals more aggressively, at the cost of underperforming (in terms of downstream interval width) in `harder' ones, where greater confidence is required and the amount of signal in the proxy data is limited. However, in small \(n\) settings, the optimistic choice of the control variate coefficient can lead to miscoverage. 
\section{Implementation Details}
\label{appendix:implementation_details}

\textbf{Hardware Setup}
We use a Franka Panda robot for our real robot experiment. For all experiments, we use joint space control at 15Hz. We use Logitech C920 webcam as our third person camera, and RealSense D405 for the wrist camera. Both cameras use resolution $192\times192$. We use a Meta Quest 2 VR headset for teleoperation to perform data collection.

% \subsection{Simulation Setup}
% We use ManiSkill3 \cite{taomaniskill3} for our simulation setup. All experiments are conducted with a customized Franka Panda robot, where the default gripper is replaced with the 3D model of the gripper used in our real-world experiments. The robot base pose in simulation is aligned with the real robot through manual calibration. Similarly, we transfer the camera calibration parameters from the real setup—covering both the wrist-mounted camera and the third-person camera—to their counterparts in simulation. We use the same control frequency as the robot in the real world.
% The workspace table is constructed by scripting a table-like mesh and overlaying it with the texture of the real table. For the background, we import a real-world mesh obtained via 3D scanning. Finally, we use the default shader with shadows enabled to strike a balance between simulation speed and visual quality. 

\subsection{Policy Implementation Details}
We implement Diffusion Policy~\cite{chi2023diffusion} and $\pi_0$ \cite{black2024pi_0} for our two experiments respectively. Below we detail their implementations.

\textbf{Diffusion Policy.} We follow the original implementations from \citet{chi2023diffusion}, and use ResNet-18~\cite{he2015deepresiduallearningimage} as our vision encoder. The policy takes in two images from the wrist camera and the third person camera, as well as the robot state. The robot state is an 8‑dimensional vector comprising the seven joint positions and a single gripper binary state. Action output is specified as 8‑dimensional target absolute joint‑position and a target gripper state.
We train the policy with 50000 gradient updates with a fixed batch size of 64. The training can be finished in approximately one wall-clock hour on a Nvidia L40 GPU. We augment the input images with standard color‑jitter and random rotation during training. A complete list of hyper‑parameters is provided in Table \ref{table:app_hyperparam_sim_dp}.

\begin{table*}[h]
\centering
\small
\renewcommand{\arraystretch}{1.05}
\caption{Hyper-parameters of simulation diffusion policy.}
\begin{tabular}{
  c
  c
  c
  c
  c
  c
  c
}
\toprule
 Model Dimension & Dim Mults & Time Embedding Dimension & History Steps & Horizon & Action Steps \\
\midrule
\addlinespace
128 & [1,2,4]  & 128  & 1 & 16 & 8 \\
\bottomrule
\end{tabular}
\label{table:app_hyperparam_sim_dp}
\end{table*}

\textbf{Multi-task $\pi_0$.} 
We fine-tune from $\pi_0$-base model. Similar to Diffusion Policy, $\pi_0$ takes in two images from the wrist camera and the third person camera, proprioception states, and additionally a task language instruction. 
\begin{wrapfigure}{r}{0.3\textwidth}
\centering
    \includegraphics[width=0.3\textwidth]{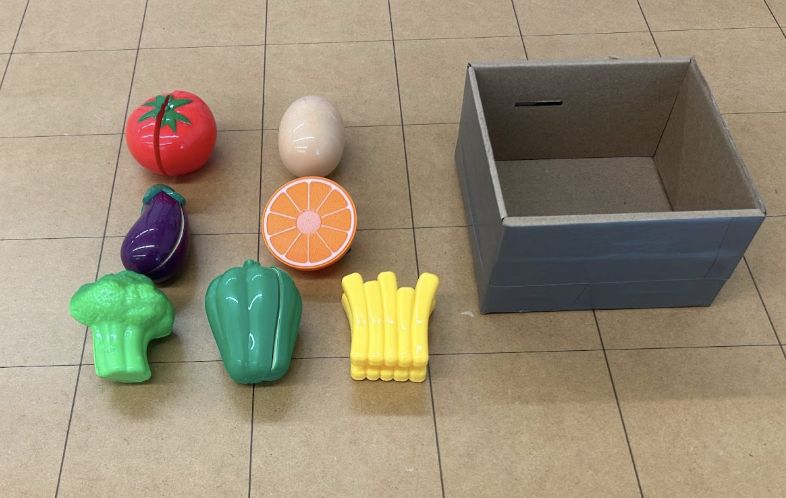}
\caption{Objects used to train multi-object policy.}
\label{fig:training_objects}
\end{wrapfigure}
The instruction is formatted as ``Put $<$object$>$ into the box". We freeze the VLM and only train the action expert. We train all policies for 15,000 gradient steps with batch size 64. The training can be finished in 8 wall-clock hours, parallelized on 4 Nvidia L40 GPUs. For other hyper-parameters, we follow the default setting from \cite{black2024pi_0}. Initially, we had trained diffusion policy for putting tomato on the plate. While fine-tuning \(\pi_0\), we chose to replace the plate with a box to make the task semantically more meaningful with a variety of objects.

\subsection{Paired and Additional Simulation Data}
A key assumption in our work is that the environments for the paired evaluation and the additional simulation evaluations are drawn i.i.d from the same distribution. We assume that the objects used for real evaluation are representative of and randomly sampled from some universal distribution of kitchen objects. As discussed in the limitations section,  operationalizing this pipeline would require a community-wide investment in building large datasets for evaluation. Despite this limitation, the following histograms in~\Cref{fig:histograms} illustrate qualitative similarity in the paired and additional simulation evaluations.

\begin{figure}[h]
\centering
\begin{subfigure}[b]{0.48\textwidth}  % Adjust width as needed
        \centering
        \includegraphics[width=\textwidth]{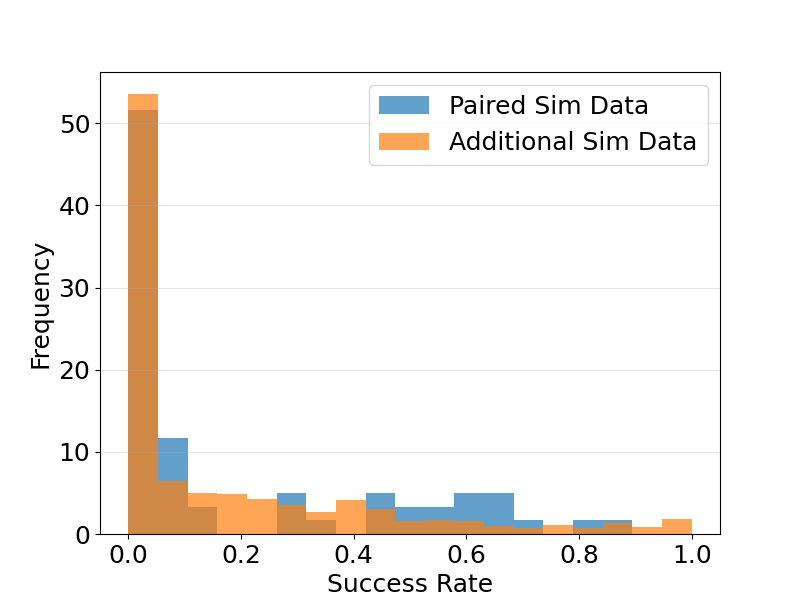}
        \caption{}
        \label{fig:franka_hist}
    \end{subfigure}
    \hfill
    \begin{subfigure}[b]{0.48\textwidth}
        \centering
        \includegraphics[width=\textwidth]{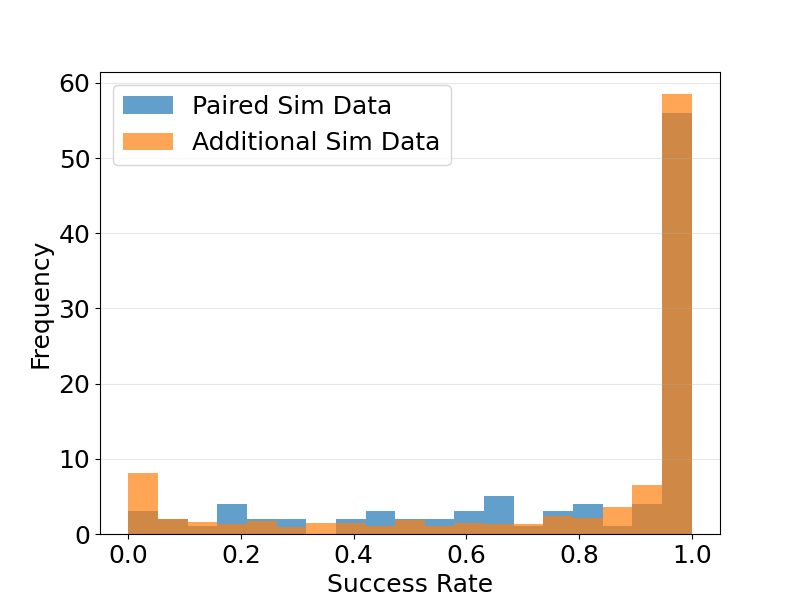}
        \caption{\(\pi_0\)}
        \label{fig:pi_0_hist}
    \end{subfigure}
\caption{Simulation success scores of policies on 3D object models used for paired and additional simulations}
\label{fig:histograms}
\end{figure}
}

% \begin{appendices}
%     \input{sections/appendix}
% \end{appendices}

\end{document}